\documentclass{article}

\usepackage{microtype}
\usepackage{graphicx}
\usepackage{subfigure}
\usepackage{booktabs} 

\usepackage{hyperref}

\usepackage[accepted]{icml2026}

\usepackage{amsmath}
\usepackage{amssymb}
\usepackage{mathtools}
\usepackage{amsthm}

\usepackage[capitalize,noabbrev]{cleveref}

\theoremstyle{plain}
\newtheorem{theorem}{Theorem}[section]

\newtheorem{lemma}[theorem]{Lemma}
\newtheorem{corollary}[theorem]{Corollary}
\theoremstyle{definition}
\newtheorem{definition}[theorem]{Definition}

\theoremstyle{remark}
\newtheorem{remark}[theorem]{Remark}

\usepackage[textsize=tiny]{todonotes}
\DeclareMathOperator*{\argmin}{arg\,min}
\usepackage{multirow}
\usepackage{tablefootnote}
\usepackage{tcolorbox}
\usepackage{verbatim}
\tcbuselibrary{breakable}

\icmltitlerunning{From Feasible to Practical: Pareto-Optimal Synthesis Planning}

\begin{document}

\twocolumn[
\icmltitle{From Feasible to Practical: Pareto-Optimal Synthesis Planning}



\icmlsetsymbol{equal}{*}

\begin{icmlauthorlist}
\icmlauthor{Friedrich Hastedt}{yyy}
\icmlauthor{Dongda Zhang}{comp}
\icmlauthor{Antonio del Rio Chanona}{yyy}
\end{icmlauthorlist}

\icmlaffiliation{yyy}{Department of Chemical Engineering, Imperial College London, UK}
\icmlaffiliation{comp}{Department of Chemical Engineering, University of Manchester, UK}

\icmlcorrespondingauthor{Antonio del Rio Chanona}{a.del-rio-chanona@imperial.ac.uk}

\icmlkeywords{Retrosynthesis}

\vskip 0.3in
]



\printAffiliationsAndNotice{} 

\begin{abstract}
Current computer-aided synthesis planning (CASP) methods often treat retrosynthesis as solved once a single feasible route is identified, focusing primarily on convergence or shortest-path metrics. This view is misaligned with real-world practice, where chemists must balance competing objectives such as cost, sustainability, toxicity, and overall yield. To address this, we formulate synthesis planning as a multi-objective search problem and introduce MORetro$^\ast$, an algorithm that generates a Pareto front of synthesis routes to explicitly capture trade-offs among user-defined criteria. MORetro$^\ast$ uses weighted scalarization and BO-informed sampling to efficiently navigate the combinatorial search space and prioritize promising trade-offs. Building on multi-objective A$^\ast$-search, we provide optimality guarantees showing that, for a fixed single-step model, MORetro$^\ast$ recovers the true Pareto front under admissibility. Across multiple retrosynthesis benchmarks, MORetro$^\ast$ produces diverse, high-quality Pareto fronts, uncovering solutions overlooked by single-objective approaches and better aligning CASP outputs with industrial decision-making.
\end{abstract}

\section{Introduction}
\label{sec:intro}
Retrosynthesis is a chemical discipline central to accelerating drug and material discovery. Starting with a target molecule, the aim of retrosynthesis is to find synthesis routes to commercially available building blocks in a backward fashion. Once a viable synthesis route is identified, the building blocks can be purchased and the physical synthesis is carried out through wet lab experimentation. However, finding such a viable synthesis route is not trivial, as each intermediate molecule in the synthesis tree could theoretically be synthesized from hundreds of chemical transformations. This becomes particularly challenging when one tries to find a synthesis route that also optimizes key metrics, such as economic and sustainability costs.  

To facilitate synthesis planning, researchers proposed computational workflows that can navigate these intrinsically large search spaces efficiently \cite{Genheden-2020, askcos}. These workflows are commonly composed of two interacting tools: i) a single-step model and ii) a multi-step planning algorithm. The single-step model is usually a black-box machine learning (ML) model that takes the product molecule as input and returns a set of possible reactants \cite{Neuralysm}. The single-step model is optimized offline to faithfully reproduce feasible reaction chemistry. As the name suggests, the single-step model only considers one reaction at a time. However, synthesis routes usually consist of multiple sequential reactions. To ensure convergence to commercial building blocks across the immense reaction space, the multi-step algorithm iteratively calls the single-step model, thereby constructing a synthesis route \cite{Segler-2018}. Most multi-step algorithms to date were proposed with this exact goal in mind: returning at least one synthesis route that ends in commercial building blocks. From the perspective of the search problem, it is paramount to find at least one synthesis route per target. In practice, chemists would nonetheless investigate diverse synthesis pathways to identify the ones that best meet personal objectives, which are often a combination of financial and sustainability interests. 

\begin{figure*}[tp]
    \centering
    \includegraphics[width=\linewidth]{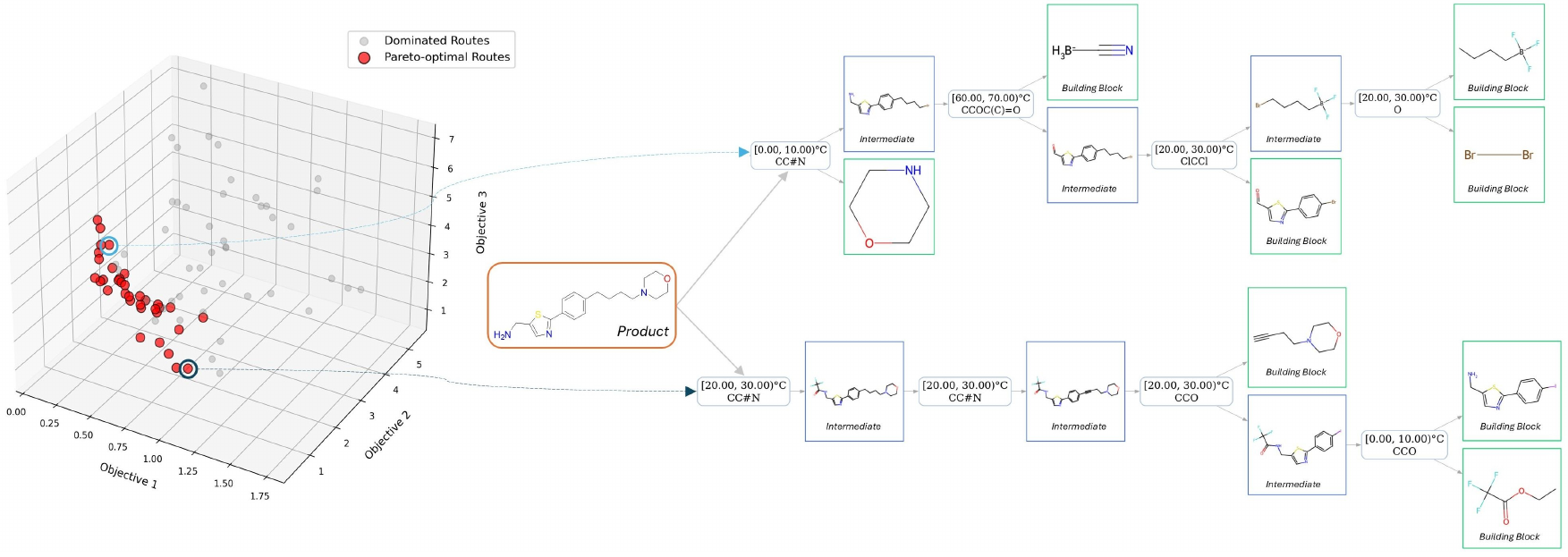}
    \caption{Exemplary Pareto front generated by MORetro$^\ast$. Each red point corresponds to a Pareto-optimal synthesis route given an arbitrary number of user-specified objectives.}
    \label{fig:pareto_example}
\end{figure*}

Embedding multiple objectives directly in the search algorithm is not trivial, as it could interfere with the algorithm's ability to find synthesis pathways. That is why the multi-step algorithm is commonly used to identify an unranked set of synthesis routes. Post-hoc analysis then optimizes with specific objectives in mind \cite{Fromer-2024a}, often adding relevant conditions (temperature and agents) in the process \cite{Sun-2025}. By keeping these two stages separate, it is highly probable that the algorithm misses out on promising synthesis routes.

In this paper, we address this challenge by introducing MORetro$^\ast$, a search algorithm for multi-objective synthesis planning. Given an arbitrary number of user-defined objectives, MORetro$^\ast$ returns the Pareto front (PF), where each point corresponds to a unique synthesis route (Figure \ref{fig:pareto_example}). MORetro$^\ast$ extends the single-objective Retro$^\ast$ algorithm \cite{chen2020retro} to the multi-objective setting by down-sampling the high-dimensional objective space via linear (weight-based) scalarization. To effectively explore this space, we introduce several weight-sampling strategies. Moreover, we derive a lower bound on the multi-dimensional objective space that guarantees the returned front to be Pareto-optimal. This bound also enables early pruning of dominated intermediate molecules; those that cannot be part of any Pareto-optimal synthesis route if further expanded. Finally, MORetro$^\ast$ directly integrates a reaction condition prediction module, which augments each reaction with predicted conditions such as temperature and reagents during the search. This integration allows us to define and optimize novel objectives of practical relevance to the chemical industry, including sustainability and scalability.

Our contributions can be summarized as follows:
\begin{itemize}
    \item We introduce a multi-objective retrosynthesis search algorithm that returns the Pareto front for any number of user-defined objectives, using weight scalarization and sampling. It outperforms single-objective baselines in both Pareto front quality and diversity, independent of the single-step model.
    \item We derive optimality bounds guaranteeing that, for a given single-step model, our algorithm can recover all Pareto-optimal synthesis routes.
    \item We define industrially relevant search objectives capturing toxicity, sustainability, and scale-up potential, enabling the algorithm to prioritize practically feasible reactions.
\end{itemize}

\section{Background}
\subsection{Related Work}
\label{sec:related}
\textbf{Computer-aided Synthesis Planning} Retrosynthetic planning is commonly formulated as a tree-search problem, where branches correspond to chemical transformations predicted by a single-step model. These transformations are often represented as reaction templates, leading to \emph{template-based} approaches that rely on neural policies \cite{Dai-2020, Chen-2021} or expert rules \cite{Szymkuc2016}. In contrast, \emph{template-free} methods avoid predefined rules and formulate retrosynthesis as sequence-to-sequence translation (\textit{e.g.}, SMILES) \cite{Schwaller-2019, Karpov-2019} or molecular graph editing \cite{Sacha-2021, somnath2021learning, Zhong-2023}. Extending single-step prediction to multi-step synthesis requires a search algorithm to select promising intermediates during tree expansion. Monte Carlo Tree Search (MCTS) addresses this via rollout-based exploration-exploitation trade-offs \cite{Segler-2018}. Alternatives avoid rollout by using AND-OR trees with heuristic guidance, either hand-crafted or neural \cite{Kishimoto-2019, chen2020retro}, with \citet{chen2020retro} proposing a neural-guided A$^\ast$-like algorithm. Subsequent work builds on MCTS and A$^\ast$ to improve route-finding success rates \cite{self_improved, retrograph, Tripp_2024, Zhao-2024}. A parallel line of work applies reinforcement learning to fine-tune single-step models in the multi-step setting \cite{Schreck-2019, Yu-2022a, Liu-2023, wang2025interretro}. Unlike our approach, these methods primarily aim to maximize the likelihood of discovering \emph{any} valid synthesis route.

\textbf{Constrained Synthesis Planning} Recent studies incorporate explicit constraints into retrosynthesis search, which, while not framed as multi-objective optimization, are conceptually related, as constraints impose simultaneous requirements on valid routes. Examples include starting-material-constrained retrosynthesis targeting specific building blocks \cite{Yu-2024, Tango}, bond constraints that restrict which bonds may be broken or preserved \cite{Thakkar-2023, Kreutter-2023, Westerlund-2025}, and approaches that condition retrosynthesis on human prompts using large language models \cite{Xuan-Vu-2025a}.

\textbf{Multi-objective Synthesis Planning} \citet{Lai-2025} first explored multi-objective synthesis planning, proposing a multi-objective Monte Carlo Tree Search (MO-MCTS) framework that maintains local Pareto fronts over molecular nodes and applying it to objectives such as stock availability, step count, synthetic complexity, and route similarity. In contrast to our work, their approach relies on stochastic sampling and does not provide guarantees of Pareto-optimality or systematic front coverage. The original implementation of MO-MCTS concerns terminal objectives and thus renders a direct comparison to our algorithm difficult. Nevertheless, we introduce a variant of MO-MCTS for additive objectives and compare it to MORetro$^{\ast}$ in Appendix \ref{app:mcts}. We refrain from drawing definitive conclusions from this comparison as no optimization of the MO-MCTS variant (\textit{e.g.}, hyperparameters) was performed. 

\textbf{Multi-objective A$^\ast$ Search} The problem of finding Pareto-optimal paths in search graphs has been studied extensively, with foundational algorithms including label-setting approaches for bi-criteria shortest paths \cite{martin-astar,moa}, and NAMOA$^\ast$ \cite{namoa}, extending to arbitrary numbers of objectives via dominance pruning of open labels. However, these methods operate on standard OR graphs, where node costs are path-independent. In retrosynthesis, the AND-OR structure introduces joint dependencies: reaction nodes require all reactant children to be simultaneously solved, meaning the cost of a molecule cannot be evaluated in isolation. We introduce a pruning mechanism for AND-OR graphs in Section \ref{sec:optimality}, which generalizes NAMOA$^\ast$-style admissibility.

\textbf{Relevant Objectives for Retrosynthesis} Beyond simple objectives such as route length or number of building blocks, more informative criteria require additional information. Predicted reactions can be augmented with condition prediction models to estimate solvents, catalysts, and temperatures \cite{Gao-2018, Sun-2025}. This enables one to define objectives related to greenness, toxicity, and sustainability \cite{Pu-2019, Wang-2020, Weber-2021}. Objectives reflecting scale-up potential and economic feasibility can further be defined using building-block prices \cite{Hastedt-2025} and reaction separability metrics \cite{Kuznetsov-2021}.

\subsection{Problem Setting}
\textbf{Synthesis Planning} We denote the set of all molecules as $\mathcal{M}$, the set of all reactions as $\mathcal{R}$, and the set of all transformation rules as $\mathcal{T}$. 
In this work, a transformation rule $\gamma_i \in \mathcal{T}$ is either a reaction template or a sequence of graph-edit instructions that can be applied to a product molecule $p_i \in \mathcal{M}$ to derive its corresponding set of reactants $r_i \subset \mathcal{M}$, such that
\begin{equation}
\gamma_i : \mathcal{M} \rightarrow 2^{\mathcal{M}}, \quad \gamma_i(p_i) = r_i.
\end{equation}
We use $R_i \in \mathcal{R}$ to denote both a reaction node in the search graph and its associated reaction tuple $(r_i, a_i, T_i, p_i, \gamma_i)$, where $a_i \subset \mathcal{M}$ denotes the set of agents (reagents) and $T_i$ the reaction temperature.
We assume the existence of a single-step retrosynthesis predictor $B$ and a reaction condition predictor $Q$. Given a product molecule $p_i$, the single-step predictor $B$ returns candidate
reactant sets $r_i$ and transformation rules $\gamma_i$. The condition predictor
$Q$ then augments each prediction with agents $a_i$ and temperature $T_i$,
yielding the full reaction tuple:
\begin{equation}
R_i = (Q \circ B)(p_i) = (r_i, a_i, T_i, p_i, \gamma_i).
\end{equation}
The goal of synthesis planning is to identify a synthesis route
$\Gamma = \{R_1, \ldots, R_n\}$ such that all terminal reactants in the route,
denoted by the frontier set $\mathcal{F}(\Gamma)$, are commercially available
building blocks $\mathcal{BB}$, i.e., $\mathcal{F}(\Gamma) \subseteq \mathcal{BB} \subseteq \mathcal{M}$.

\textbf{Pareto Optimality} 
Let each reaction $R_i$ be associated with an $N_D$-dimensional non-negative cost
$\mathbf{c}(R_i) \in \mathbb{R}^{N_D}_{\ge 0}$, and let the cost of a synthesis route
$\Gamma = \{R_1, \ldots, R_n\}$ be defined as
$\mathbf{C}(\Gamma) = \sum_{i=1}^n \mathbf{c}(R_i)$.
A synthesis route $\Gamma^\ast$ is Pareto optimal if there exists no other route
$\Gamma$ such that $\mathbf{C}(\Gamma) \preceq \mathbf{C}(\Gamma^\ast)$ (component-wise) and
$\mathbf{C}(\Gamma) \neq \mathbf{C}(\Gamma^\ast)$.
The Pareto front $\mathcal{P}$ is defined as the set of all Pareto-optimal routes,
\begin{equation}
\mathcal{P}
= \left\{ \Gamma^\ast \;\middle|\; \nexists \, \Gamma :
\mathbf{C}(\Gamma) \preceq \mathbf{C}(\Gamma^\ast) \wedge
\mathbf{C}(\Gamma) \neq \mathbf{C}(\Gamma^\ast) \right\}.
\end{equation}
Let $\hat{\mathcal{P}}$ denote the set of non-dominated routes discovered by the algorithm during execution.

\section{Methods}
\label{sec:methods}
Our algorithm is built on previous work by \citet{chen2020retro} (Retro$^\ast$) and extended by \citet{retrograph} (RetroGraph). 

\begin{figure*}[tp]
    \centering
    \includegraphics[width=0.9\linewidth]{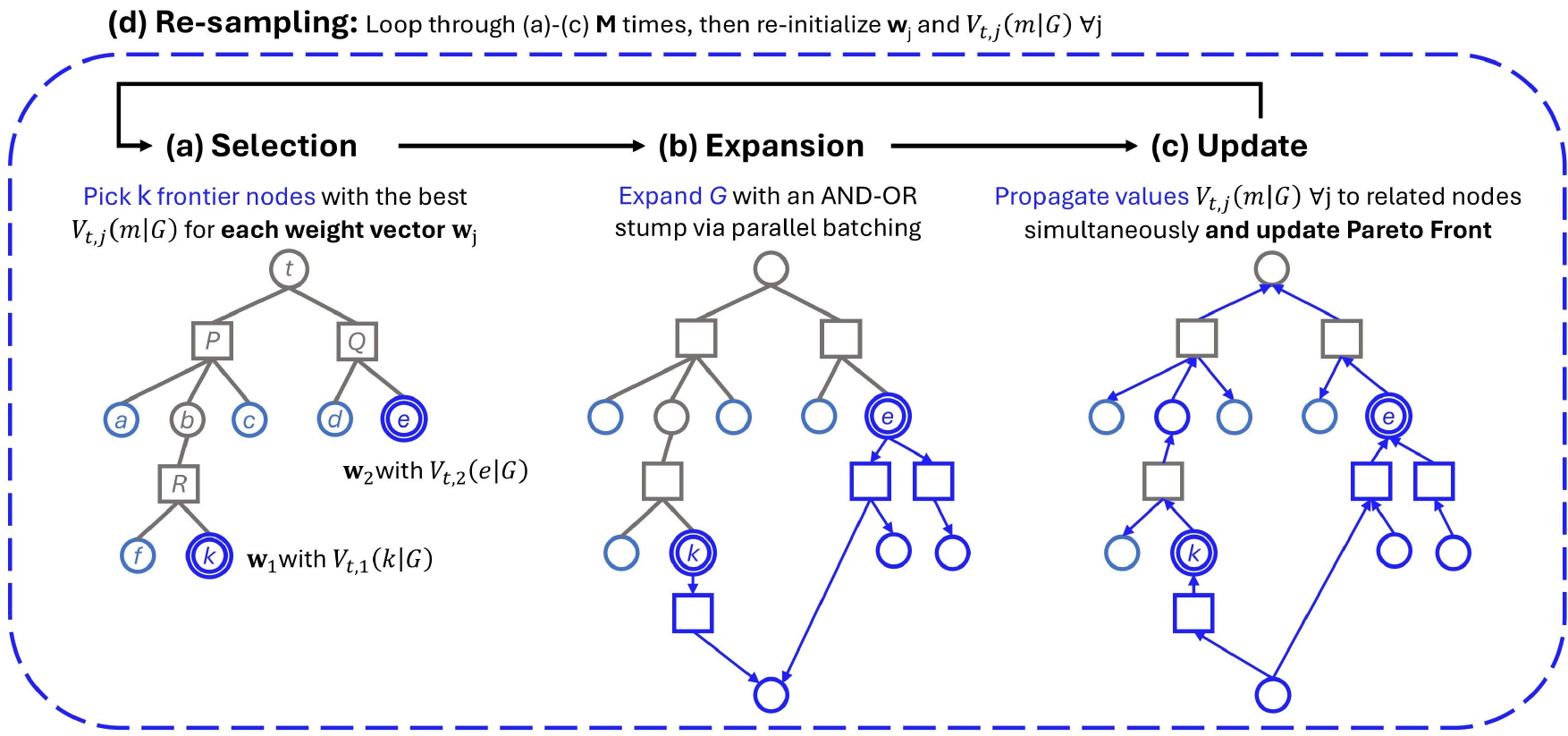}
    \caption{Overview of MORetro$^\ast$. During each iteration, multiple frontier nodes are picked according to different weight vectors, expanded and new solutions are recorded. After $w_\text{budget}$ (M) iterations, weight vectors are re-sampled.}
    \label{fig:moretro}
\end{figure*}

\subsection{A$^\ast$ Search for Multi-Objective Synthesis Planning} 
Following the A$^\ast$ formulation and \citet{chen2020retro}, the value
$\mathbf{V}_{t}(m|G)$ of a molecule $m$ can be decomposed as
\begin{equation}
    \mathbf{V}_{t}(m|G) = \mathbf{g}(m|G) + \mathbf{h}(m|G),
\end{equation}
where $\mathbf{g}(m|G)$ and $\mathbf{h}(m|G)$ denote the $N_D$-dimensional cost
of the path from the product root $t$ to $m$, and the estimated future synthesis
cost, respectively. Thus, $\mathbf{V}_{t}(m|G)$ estimates the minimum cost of
synthesizing the target product $t$ along routes that pass through $m$. We further introduce the reaction number $\mathbf{rn}(m|G)$ and the
graph-independent heuristic estimate $\mathbf{V}_m$, both in $\mathbb{R}^{N_D}$. The heuristic $\mathbf{V}_m$ estimates the minimum cost
required to synthesize molecule $m$ independently of the search graph $G$, where
$\mathbf{V}_m = [V_{m,1}, \ldots, V_{m,N_D}]$ denotes one heuristic per objective. In contrast, the reaction number $\mathbf{rn}(m|G)$ estimates the minimum
synthesis cost of $m$ conditioned on reactions currently present in $G$
\citep{chen2020retro} (see Eqs.~\ref{eq:9}--\ref{eq:11}).

To estimate reaction costs, we define a collection of scalar-valued cost functions
\(\{f_1, \ldots, f_{N_D}\} \),
where each \( f_i \) maps a reaction tuple \( R \) to a scalar and corresponds to an individual objective. For a reaction \( R_i \), these functions jointly define a vector-valued cost
\begin{equation}
\mathbf{c}(R_i) = (f_1(R_i), \ldots, f_{N_D}(R_i)) \in \mathbb{R}^{N_D}.
\end{equation}
The exact objectives used in this study are further discussed in Section \ref{sec:obj}. Note that all costs are normalized to [0,1] using expected maximum and minimum values.

\subsection{Overview of MORetro$^\ast$}
Our algorithm is presented visually in Figure \ref{fig:moretro} and in Algorithm \ref{alg:moretro}. The algorithm is based on an AND-OR graph structure as proposed by \citet{retrograph}. The graph is a directed acyclic graph (DAG) to avoid any loops. Molecules are naturally represented as OR nodes (circles in Fig. \ref{fig:moretro}), since only one child reaction must be solved, whereas reactions are represented as AND nodes (rectangles in Fig. \ref{fig:moretro}) as all children molecules/reactants must be solved. Our algorithm is based on principled weight sampling, which scalarizes the
${N_D}$-dimensional objective space into a single scalar objective. Specifically, we generate weight vectors $\mathbf{w}_j \in \mathbf{W}$ using a sampling mechanism $P(\mathbf{W})$, defined over the ${N_D}$-dimensional probability simplex,
\begin{equation}
    \mathbf{W} = \left\{ \mathbf{w} \in \mathbb{R}^{N_D}_{\ge 0} \;\middle|\;
\sum_{i=1}^{N_D} w_i = 1 \right\}.
\end{equation}
The exact weight sampling strategies are further discussed in Section \ref{sec:weight_s}. We initially sample $N_S$ weight vectors $\{\mathbf{w}_j\}_{j=1}^{N_S}$ from $P(\mathbf{W})$. These weight vectors are used simultaneously during the search. In particular, for each sampled weight vector $\mathbf{w}_j \in \mathbf{W}$,
we calculate the value function and reaction number via linear scalarization:
\begin{subequations}
\begin{align}
    V_{t,j}(\cdot|G) &= \mathbf{w}_j^\top \mathbf{V}_t(\cdot|G), \\
    rn_j(\cdot|G) &= \mathbf{w}_j^\top \mathbf{rn}(\cdot|G),
    \quad \forall j \in \{1, \ldots, N_S\}.
\end{align}
\end{subequations}
In other words, each weight vector induces its own scalarized value and reaction functions and conducts an independent search over the shared tree $G$. Once the weight vectors are sampled, MORetro$^\ast$ proceeds through four stages, outlined below. These stages build on ideas from \citet{chen2020retro,retrograph}, with additional modifications to support multi-objective search. We therefore keep the description concise and refer the reader to prior work for detailed discussions.

\begin{algorithm}[tp]
\caption{MORetro$^\ast$($t$). 
All operations over $j$ are executed in parallel via batching; 
loop notation used for clarity.}
\label{alg:moretro}
\begin{algorithmic}
\STATE \textbf{Input:} Graph $G = (\mathcal{V}, \mathcal{E})$ with
$\mathcal{V} \leftarrow \{t\}$, $\mathcal{E} \leftarrow \varnothing$; $\hat{\mathcal{P}} \leftarrow \varnothing$;
Weights $\mathbf{W}$; Budget $w_{\text{budget}}$
\STATE \textbf{Initialize:} Sample weights
$\{\mathbf{w}_j\}_{j=1}^{N_S} \sim P(\mathbf{W})$; $k \leftarrow 0$
\REPEAT
\FOR{\textbf{all } $j = 1,\dots,N_S$}
    \STATE $m_{j}^{\text{next}}
    \leftarrow
    \arg\min_{m \in \mathcal{F}(G)} V_{t,j}(m|G)$
    \COMMENT{Selection}

    \STATE $\{R_{i,j}\}_{i=1}^{K}
    \leftarrow (Q \circ B)(m_{j}^{\text{next}})$
    \COMMENT{Prediction}

    \STATE $\{\mathbf{c}(R_{i,j})\}_{i=1}^K \leftarrow (f_1(R_{i,j}), \ldots, f_{N_D}(R_{i,j}))$

    \FOR{\textbf{all } $i = 1, \dots, K$}
    \STATE \COMMENT{Add reaction and reactant nodes}
    \STATE $G \leftarrow G \cup \{R_{i,j}, r_{i,j}\}$
    \ENDFOR
\ENDFOR
\STATE \textbf{Update} value functions
$\{V_{t,j}\}_{j=1}^{N_S}$ on $\mathcal{F}(G)$
\STATE \textbf{Record} any new non-dominated route by updating
$\hat{\mathcal{P}} \leftarrow \text{ND}(\hat{\mathcal{P}} \cup \Gamma^\ast)$ and the successful weight
$\mathbf{w}_j$
\IF{$k > 0$ \textbf{and} $k \bmod w_{\text{budget}} = 0$}
    \STATE \textbf{Reinitialize} weights
    $\{\mathbf{w}_j\}_{j=1}^{N_S} \sim P(\mathbf{W})$
    \STATE \textbf{Update} value functions
$\{V_{t,j}\}_{j=1}^{N_S}$ on $\mathcal{F}(G)$
\ENDIF
\STATE $k \leftarrow k + 1$
\UNTIL \texttt{termination} = True
\STATE \textbf{return} Pareto front $\hat{\mathcal{P}}$
\end{algorithmic}
\end{algorithm}

\textbf{Selection}
Given the frontier set $\mathcal{F}(G)$, each weight vector selects the molecule
minimizing its scalarized value function,
\begin{equation}
    m_j^{\text{next}} = \argmin_{m \in \mathcal{F}(G)} V_{t,j}(m|G),
    \quad \forall j.
\end{equation}
If multiple weights select the same molecule, they are grouped and processed
jointly in subsequent steps.

\textbf{Expansion}
The single-step retrosynthesis and condition predictors $B(\cdot)$ and $Q(\cdot)$
are applied sequentially to the selected molecules $m_j^{\text{next}}$ using batched inference. This renders our implementation computationally efficient. For each $m_j^{\text{next}}$, the models predict $K$ candidate
reactions $\{R_{i,j}\}_{i=1}^K$ with associated cost vectors
$\{\mathbf{c}(R_{i,j})\}_{i=1}^K$. All resulting reaction and molecule nodes are
added to the search tree $G$, and newly introduced molecule nodes are initialized
as
\begin{equation}
\label{eq:9}
rn_j(m|G) \leftarrow \mathbf{w}_j^\top \mathbf{V}_m, \quad \forall j.
\end{equation}

\textbf{Update}
Following \citet{Yu-2024}, costs are propagated upward through the graph for all weights $\mathbf{w}_j$.
For a reaction node $R$, the reaction number is computed as
\begin{equation}
    rn_j(R|G) \leftarrow \mathbf{w}_j^\top \mathbf{c}(R)
    + \sum_{m \in ch(R)} rn_j(m|G).
\end{equation}
For non-frontier molecule nodes,
\begin{equation}
\label{eq:11}
    rn_j(m|G) \leftarrow \min_{R \in ch(m)} rn_j(R|G).
\end{equation}
These updates are applied recursively until the target node $t$ is reached, with
$V_{t,j}(t|G) \leftarrow rn_j(t|G)$. Value updates are then propagated
downward as
\begin{equation}
\label{eq:value}
\begin{aligned}
    V_{t,j}(R|G) &\leftarrow rn_j(R|G) - rn_j(pr(R)|G)
    + V_{t,j}(pr(R)|G), \\
    V_{t,j}(m|G) &\leftarrow \min_{R \in pr(m)} V_{t,j}(R|G),
\end{aligned}
\end{equation}
where $ch(\cdot)$ and $pr(\cdot)$ denote children and parent nodes, respectively. Finally, each newly found Pareto-optimal route $\Gamma^\ast$ is stored with its corresponding hypervolume contribution and the weight vector that generated it.

\textbf{Re-sampling} 
Every $w_\text{budget}$ iterations, $N_S$ new weights are drawn from $P(\mathbf{W})$. Following re-sampling, the reaction numbers $rn_j(\cdot|G)$ and value functions $V_{t,j}(\cdot|G)$ are updated for all nodes in the graph. The algorithm runs until termination\footnote{Termination is triggered if (i) time budget or (ii) single-step budget is exhausted, or if (iii) all weights have been sampled.}. We next describe the sampling strategies investigated. 

\subsection{Weight Sampling Strategies}
\label{sec:weight_s}
\begin{figure}[b!]
    \centering
    \includegraphics[width=\linewidth]{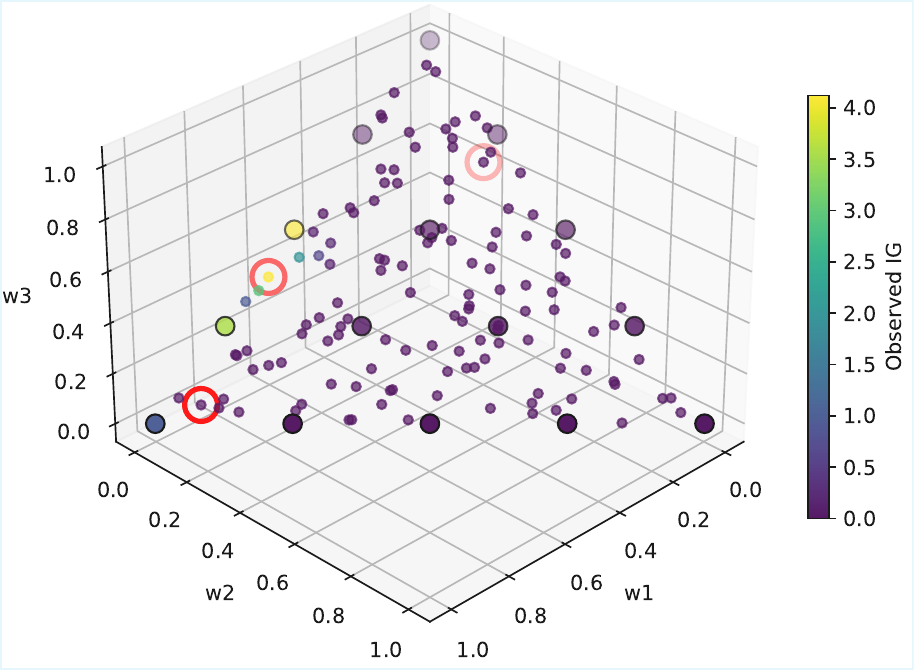}
    \caption{BO-informed sampling. The selected weights (drawn from Sobol sequences) for the next iteration are encircled. Larger dots show weights that were explored during the warm-up period.}
    \label{fig:bo_sampling}
\end{figure}

We first investigate two simple weight-sampling strategies: grid sampling and Sobol sampling. 
For deterministic grid sampling, weights are drawn from a uniform grid:
\begin{equation}
    \mathbf{w} \sim P(\mathbf{W}) = \text{Uniform}(\mathbf{W}_{\text{grid}}),
\end{equation}
where $\mathbf{W}_{\text{grid}} \subseteq \mathbf{W}$ is a finite set of weight vectors forming a uniform grid. 
For Sobol sampling, we generate $N_S$ quasi-random low-discrepancy weight vectors that evenly cover the space:
\begin{equation}
    \mathbf{w} \sim P(\mathbf{W}) = \text{Sobol}(\mathbf{W}).
\end{equation}

On top of these simple strategies, we propose a solution-informed Bayesian optimization (BO) weight-sampling scheme (Figure \ref{fig:bo_sampling}). During a warm-up phase, weights are drawn from the grid or Sobol sequence (discussed further in Appendix \ref{app:hp}). After warm-up, $P(\mathbf{W})$ is defined implicitly via BO: a surrogate model is fit to the observed hypervolume (HV) improvements of previously evaluated weights, and new weights are sampled to maximize expected information gain (IG). Since MORetro$^\ast$ evaluates weights in batches of size $N_S$, we use the GIBBON acquisition function $\alpha_{\text{GIBBON}}$ \cite{Gibbon}, which approximates the information gain of jointly evaluating a batch while promoting diversity. The resulting sampling mechanism satisfies
\begin{equation}
    \mathbf{w} \sim P(\mathbf{W}) \propto
    \exp\!\Big(
    \alpha_{\text{GIBBON}}\!\big(\{\mathbf{w}_j\}_{j=1}^{N_S}\big)
    \Big).
\end{equation}

\subsection{Proposed Objectives for Synthesis Planning}
\label{sec:obj}
While any set of objectives could be used for multi-objective planning, we propose three that capture practical considerations in chemical synthesis: \textbf{sustainability}, \textbf{toxicity (hazardousness)}, and \textbf{scale-up potential}. Sustainability is quantified as a linear combination of reaction temperature and atom economy. Toxicity accounts for the hazards associated with catalysts and solvents. Scale-up potential measures the ``ease of separability'' of the reaction mixture, approximated as the logP difference between products and reactants. Together, these objectives contribute to $\mathbf{g}(m|G)$, representing the cumulative cost of a synthesis pathway $\Gamma$. To estimate the future heuristic cost $\mathbf{V}_m$, we avoid training offline models on a limited set of existing pathways, as done previously \cite{chen2020retro}, since such models exhibit high variance and provide minimal performance gain. Instead, we define heuristics aligned with the proposed objectives. For instance, for toxicity, we predict the toxicity of molecule $m$ \cite{Pu-2019}: if $m$ contains toxic moieties, its expansion is discouraged, as future reactants are likely to inherit these.

Finally, we introduce an additional guiding objective designed solely to steer the search toward purchasable building blocks. This objective is \emph{not included} in the Pareto analysis (metrics reported in Section~\ref{sec:experiments} are computed
over the remaining three objectives). Examples include reaction probability and the value estimate heuristic from Retro$^\ast$. A complete description of all objectives is provided in Appendix \ref{app:obj}.

\subsection{Optimality Guarantee for the Pareto Front}
\label{sec:optimality}
In this section, we introduce a tight, vector-valued quantity on the cost of
synthesizing the target molecule $t$ through an intermediate molecule $m$. This quantity:
\begin{enumerate}
    \item Guarantees correctness of the recovered Pareto front under our search procedure.
    \item Enables safe pruning of unpromising frontier molecules $m \in \mathcal{F}(G)$.
\end{enumerate}
The full proof is provided in Appendix~\ref{app:proof}.

\begin{definition}[Pruning Number]
The vector-valued \emph{pruning number} $\mathbf{pn}$ for each node in a retrosynthesis
AND-OR graph is defined recursively as:
\begin{equation}
\begin{aligned}
\label{eq:pruning}
\mathbf{pn}(R|G) &= \mathbf{c}(R) + \sum_{m \in ch(R)} \mathbf{pn}(m|G), \\
\mathbf{pn}(m|G) &= \min_{R \in ch(m)} \mathbf{pn}(R|G), \quad \forall m \notin \mathcal{F}(G), \\
\mathbf{pn}(m|G) &= \mathbf{V}_m, \quad \forall m \in \mathcal{F}(G),
\end{aligned}
\end{equation}
where all sums and minima are taken component-wise. The component-wise minimum preserves the lower bound.
\end{definition}

\begin{definition}[$\mathbf{V}_\text{bound}$]
The vector-valued $\mathbf{V}_\text{bound}$ is computed from $\mathbf{pn}$ using the same propagation rules as in Eq.~\ref{eq:value} (substitute $\mathbf{pn}$ for $rn_j$ and $\mathbf{V}_\text{bound}$ for $V_{t,j}(m)$). It estimates the cost of any \textit{hypothetical}
synthesis route passing through a molecule $m$.
\end{definition}

\begin{definition}[Bound dominance condition]
\label{def:bound-condition}
For an unexpanded molecule $m \in \mathcal{F}(G)$, we say that the
\emph{bound dominance condition} holds if there exists a discovered
synthesis route $\hat{\Gamma} \in \hat{\mathcal{P}}$ such that
\begin{equation}
\label{eq:bound-dominance}
\mathbf{C}(\hat{\Gamma}) \prec \mathbf{V}_{\text{bound}}(m).
\end{equation}
\end{definition}

\begin{theorem}
\label{thm:pareto}
Let $G$ be a retrosynthesis AND-OR graph with reaction costs
$\mathbf{c}(R) \in \mathbb{R}^{N_D}_{\ge 0}$.
Assume that $\mathbf{V}_m$ or a valid lower bound is known for all molecules $m$, and that at least one feasible synthesis route exists (\textit{i.e.}, $\mathcal{P} \neq \varnothing$).

Then, if the termination criterion is modified to halt when the bound dominance
condition (Definition~\ref{def:bound-condition}) holds for all
$m \in \mathcal{F}(G)$, Algorithm~\ref{alg:moretro} returns $\hat{\mathcal{P}} = \mathcal{P}$.
\end{theorem}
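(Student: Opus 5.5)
The proof establishes two inclusions, $\mathcal{P} \subseteq \hat{\mathcal{P}}$ and $\hat{\mathcal{P}} \subseteq \mathcal{P}$, and both rest on a single lemma: $\mathbf{V}_{\text{bound}}(m)$ is a component-wise lower bound on $\mathbf{C}(\Gamma)$ for every feasible route $\Gamma$ (with respect to the fixed single-step model) that passes through the frontier molecule $m$.

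\emph{Step 1 (the lower-bound lemma).} The first task is to show, by induction on the depth of the DAG from the leaves upward, that $\mathbf{pn}(x|G)$ lower-bounds component-wise the cost of any completion of the subgraph rooted at $x$.
\begin{itemize}
\item Frontier molecules satisfy this by the admissibility assumption on $\mathbf{V}_m$.
\item Reaction nodes inherit it, because costs are additive and non-negative.
\item For an expanded molecule $m$, any route through $m$ must use one of its children $R$ (the search graph contains all $K$ predictions of the fixed model). Hence the component-wise minimum over $ch(m)$ lies below each child's bound, and so below the route cost, even though the minimum need not be attained by a single route.
\end{itemize}
Next I would unfold the downward propagation of Eq.~\ref{eq:value} with $\mathbf{pn}$ substituted. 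For a route $\Gamma$ containing $m$, take the path $t=m_0,R_1,m_1,\dots,R_k,m_k=m$ inside $\Gamma$. Then $\mathbf{V}_{\text{bound}}$ along this path telescopes to
\[
\sum_{l}\Big(\mathbf{c}(R_l)+\sum_{m'\in ch(R_l)\setminus\{m_l\}}\mathbf{pn}(m'|G)\Big)+\mathbf{pn}(m|G),
\]
and each term is bounded by the corresponding portion of $\mathbf{C}(\Gamma)$. The minimum over parents in Eq.~\ref{eq:value} only lowers the value further, so $\mathbf{V}_{\text{bound}}(m)\preceq \mathbf{C}(\Gamma)$.

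\emph{Step 2 (completeness, $\mathcal{P}\subseteq\hat{\mathcal{P}}$).} Take $\Gamma^\ast\in\mathcal{P}$ and suppose, for contradiction, that $\Gamma^\ast$ (or a route with equal cost) was not recorded at termination. Since $\mathcal{P}\neq\varnothing$ and the graph only grows by expansion from $t$, $\Gamma^\ast$ is not fully contained in $G$ with all leaves in $\mathcal{BB}$. Therefore some molecule of $\Gamma^\ast$ present in $G$ is still in $\mathcal{F}(G)$ and is not a building block. To see this, walk down $\Gamma^\ast$ from $t$: each expanded molecule has the reaction $\Gamma^\ast$ uses among its children, so the walk stays in $G$ until it hits an unexpanded node. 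If instead every molecule of $\Gamma^\ast$ were in $G$ and solved, the Record step would have inserted $\Gamma^\ast$ into $\hat{\mathcal{P}}$ or kept a route of equal cost. At termination the bound dominance condition gives $\hat\Gamma$ with $\mathbf{C}(\hat\Gamma)\prec\mathbf{V}_{\text{bound}}(m)\preceq\mathbf{C}(\Gamma^\ast)$ by Step 1. This contradicts the Pareto optimality of $\Gamma^\ast$.

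\emph{Step 3 (soundness, $\hat{\mathcal{P}}\subseteq\mathcal{P}$).} Every element of $\hat{\mathcal{P}}$ is a feasible route and is non-dominated within $\hat{\mathcal{P}}$ by the ND update. If some $\Gamma$ dominated it, then by Step 2 $\Gamma$ would itself be dominated by or equal in cost to a member of $\mathcal{P}\subseteq\hat{\mathcal{P}}$, which contradicts non-dominance within $\hat{\mathcal{P}}$.

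\emph{Main obstacle.} I expect the hardest part to be Step 1 in the AND-OR setting. The trouble is that a molecule in a DAG may be shared across branches and have several parents, so the telescoping must be done along one chosen parent path while sibling subtrees are bounded by their own $\mathbf{pn}$. I would handle this by fixing the route $\Gamma$ and arguing only along its tree unfolding, using that the per-route cost double-counts shared molecules exactly as $\mathbf{pn}$ does. Smaller points to state explicitly are that $\mathbf{C}(\hat\Gamma)\prec$ means strict dominance, and that $\mathcal{P}$ is understood up to routes of equal cost.
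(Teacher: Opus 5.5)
Your proposal is correct and follows essentially the same proof as the paper. Both arguments first establish component-wise admissibility of $\mathbf{V}_{\text{bound}}$, which the paper gets by reading each component as a single-objective Retro$^\ast$ value and inducting from the root, where you telescope along a path of the route. Both then prove completeness by contradiction through $\mathbf{C}(\hat{\Gamma}) \prec \mathbf{V}_{\text{bound}}(m) \preceq \mathbf{C}(\Gamma^\ast)$.

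Your soundness step, which uses the termination condition rather than mere non-dominance within $\hat{\mathcal{P}}$, is more careful than the paper's sketch. So are your explicit assumptions: solved routes in $G$ are recorded, shared intermediates are counted as in the tree unfolding, and ties in cost are allowed.
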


\begin{corollary}[Safe node pruning]
Any unexpanded molecule $m \in \mathcal{F}(G)$ satisfying the bound dominance
condition cannot lie on a Pareto-optimal synthesis route and may therefore be
safely pruned.
\end{corollary}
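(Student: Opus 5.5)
The plan is to reduce the corollary to one admissibility fact about $\mathbf{V}_\text{bound}$, namely
\[
\mathbf{V}_\text{bound}(m) \preceq \mathbf{C}(\Gamma) \quad \text{for every synthesis route } \Gamma \text{ of } t \text{ that uses } m,
\]
valid at every stage of the search. Given this, the corollary follows at once. If the bound dominance condition holds for $m$, there is a $\hat{\Gamma} \in \hat{\mathcal{P}}$ with $\mathbf{C}(\hat{\Gamma}) \prec \mathbf{V}_\text{bound}(m) \preceq \mathbf{C}(\Gamma)$. So $\mathbf{C}(\hat{\Gamma}) \preceq \mathbf{C}(\Gamma)$ and $\mathbf{C}(\hat{\Gamma}) \neq \mathbf{C}(\Gamma)$. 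This holds whether $\prec$ means strict in every component or weak dominance with inequality. Hence $\Gamma \notin \mathcal{P}$.

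Because no route through $m$ is Pareto-optimal, deleting $m$, and with it every completion below it, removes no element of $\mathcal{P}$ from the routes still reachable. Pruning is therefore safe.

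\textbf{Reduction to a scalar claim.} All operations defining $\mathbf{pn}$ and $\mathbf{V}_\text{bound}$ are componentwise: sums, minima, and the downward rule of Eq.~\ref{eq:value}. So I will prove the inequality one objective $k$ at a time. I will not try to show that a single reaction choice attains the minimum in all components simultaneously; that is false in general, and it is not needed.

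Fix $k$. By induction on the DAG from the frontier upward, I will show that $pn_k(x|G)$ equals the minimum, over solution subtrees $S$ rooted at $x$ inside $G$, of
\[
\sum_{R \in S} c_k(R) + \sum_{\ell \in \text{leaves}(S)\cap\mathcal{F}(G)} V_{\ell,k}.
\]
A solution subtree selects one child reaction at each OR node and all children at each AND node. The inductive step is immediate from the recursion in Eq.~\ref{eq:pruning}.

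Next, by induction downward from $t$, I will show that $V_{\text{bound},k}(x)$ equals the same minimum taken over solution subtrees rooted at $t$ that contain $x$. For a reaction $R$, the quantity $pn_k(R) - pn_k(pr(R)) + V_{\text{bound},k}(pr(R))$ replaces, in the best tree through $pr(R)$, the branch chosen at $pr(R)$ by the branch through $R$. The outer minimum over $pr(m)$ then covers every way of reaching $m$. This is the Retro$^\ast$ argument, with $\mathbf{V}_m$ in place of the learned value.

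\textbf{Comparison with an actual route.} Let $\Gamma$ be any route through $m$. Its restriction to the explored graph $G$ is a solution subtree $S$ rooted at $t$ that contains $m$. The parts of $\Gamma$ below each frontier leaf $\ell$ of $S$ are full routes for $\ell$. By the theorem's assumption that $\mathbf{V}_\ell$ is a valid lower bound, each of these has cost at least $\mathbf{V}_\ell$ componentwise. All reaction costs are nonnegative, so summing gives $\mathbf{C}(\Gamma) \succeq$ the objective of $S$ in every component $k$, and this is in turn $\succeq \mathbf{V}_\text{bound}(m)$. The argument does not depend on how far $G$ has been expanded, so the bound remains valid as the search continues, and a pruned node never needs to be revisited.

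\textbf{Main obstacle.} The delicate step is the downward characterization of $V_{\text{bound},k}$ on a DAG rather than a tree. A molecule may have several parents, and a shared intermediate may appear more than once in a route. I will handle this by working with route costs in tree-unfolded form, where each occurrence is counted, which is consistent with $\mathbf{C}(\Gamma)=\sum_i \mathbf{c}(R_i)$ as the recursion computes it. I will also check that the minimum over $pr(m)$ in Eq.~\ref{eq:value} ranges over every parent path, so that every route through $m$ is covered.
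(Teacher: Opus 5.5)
Your argument is correct, and at the top level it is the paper's. The paper justifies the corollary through the completeness step in the proof of Theorem~\ref{thm:pareto}: Lemma~\ref{lem:vbound-admissible} followed by the chain $\mathbf{C}(\hat{\Gamma}) \prec \mathbf{V}_\text{bound}(m) \preceq \mathbf{C}(\Gamma)$.

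You differ in how admissibility of $\mathbf{V}_\text{bound}$ is proved. The paper appeals to Retro$^\ast$'s optimality lemma for each objective, then argues that admissibility is inherited down the graph. For example, it declares $\mathbf{pn}(R) - \mathbf{pn}(pr(R)) + \mathbf{V}_\text{bound}(pr(R))$ admissible because $\mathbf{pn}$ and $\mathbf{V}_\text{bound}(pr(R))$ are. As written that step is only heuristic, because the lower-bound property is not preserved when you subtract $\mathbf{pn}(pr(R))$. What is actually needed is that $V_{\text{bound},k}(pr(R)) - pn_k(pr(R))$ equals the minimal cost of the part of a solution tree lying outside the subtree at $pr(R)$. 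Your exact characterization of $pn_k$ and $V_{\text{bound},k}$ as minima over tree-unfolded solution subtrees delivers exactly that.

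Your version costs an extra induction, but in return it:
\begin{itemize}
\item actually proves the lemma;
\item makes the per-component choice of reactions explicit;
\item surfaces the multiplicity convention. This is necessary rather than cosmetic: if $\Gamma=\{R_1,\dots,R_n\}$ is read as a set and an intermediate is reused, $\mathbf{pn}$ counts its synthesis twice and $\mathbf{V}_\text{bound}$ can exceed $\mathbf{C}(\Gamma)$.
\end{itemize}

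One caveat, which the paper shares, concerns your step ``the restriction of $\Gamma$ to $G$ is a solution subtree containing $m$''. It needs two things:
\begin{itemize}
\item $\Gamma$ uses only reactions produced by $Q\circ B$, so that at every expanded node its reaction is already a child in $G$ (the fixed-single-step-model assumption);
\item $\Gamma$ reaches $m$ along edges already in $G$.
\end{itemize}
In the merged DAG, a route can reach the same molecule $m$ only beneath another frontier node $m'$. Such a route is not bounded by the current $\mathbf{V}_\text{bound}(m)$, and that value can decrease once $m$ acquires the new parent. So your claims that every route through $m$ is covered, and that a pruned node never needs revisiting, hold for $m$ in its current position in $G$, not for the molecule $m$ globally.
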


\begin{remark}
  If the dominance relation in the bound dominance condition is relaxed to $\varepsilon$-dominance, the theorem extends directly to guarantee recovery of the $\varepsilon$-Pareto front. 
\end{remark}

Since the heuristics defined in Section \ref{sec:obj} are not admissible, we use the zero vector $\mathbf{V}_m=[0,\ldots,0] \in \mathbb{R}^{N_D}$ to calculate $\mathbf{V}_{\text{bound}}$, in practice.

\section{Experiments}
\label{sec:experiments}
\begin{table*}[tp]
\renewcommand{\arraystretch}{1.11}
\centering
\caption{Summary of multi-objective performance on the USPTO-190 dataset for three different single-step models. Our algorithm with BO sampling is compared to the Fixed and single-objective Retro$^\ast$ baselines. The number of single-step expansions is kept to $N_B=300$.}
\label{tab:pareto_quality}
\begin{tabular}{llllcc}
\hline
 &  & \multicolumn{2}{c}{Pareto front Statistics} & \multicolumn{2}{c}{MORetro$^\ast$ Dominance} \\ \cline{3-6} 
Model & Method & HV ($\uparrow$) & R2 ($\downarrow$) & Base. Dom. \% ($\uparrow$) & Self Dom. \% $(\downarrow)$ \\ \hline
\multicolumn{1}{l}{\multirow{3}{*}{G2E}} & Retro$^\ast$ & 0.98 $\pm$ 0.32 & 0.45 $\pm$ 0.14 & 34\% & 6\% \\
\multicolumn{1}{c}{} & Fixed & 0.97 $\pm$ 0.40 & 0.47 $\pm$ 0.09 & 21\% & 2\% \\
\multicolumn{1}{c}{} & MORetro$^\ast$ (BO) & \textbf{1.04} $\pm$ 0.31$^{\ast \ast \ast}$ & \textbf{0.43} $\pm$ 0.11$^{\ast \ast \ast}$ & - & - \\ \hline
\multirow{3}{*}{PDVN} & Retro$^\ast$ & 0.88 $\pm$ 0.37 & 0.48 $\pm$ 0.12 & 26\% & 8\% \\
 & Fixed & 0.74 $\pm$ 0.54 & 0.5 $\pm$ 0.12 & 36\% & 2\% \\
 & MORetro$^\ast$ (BO) & \textbf{0.92} $\pm$ 0.37$^{\ast \ast \ast}$ & \textbf{0.47} $\pm$ 0.11$^{\ast}$ & - & - \\ \hline
\multirow{3}{*}{NeuralSym} & Retro$^\ast$ & 0.52 $\pm$ 0.51 & 0.49 $\pm$ 0.13 & 31\% & 17\% \\
 & Fixed & 0.48 $\pm$ 0.55 & 0.51 $\pm$ 0.09 & 18\% & 2\% \\
 & MORetro$^\ast$ (BO) & \textbf{0.56} $\pm$ 0.51$^{\ast \ast \ast}$ & \textbf{0.48} $\pm$ 0.10 & - & - \\ \hline
\end{tabular}\\
{\centering Statistical significance MORetro$^\ast$ vs. baselines: $^{\ast \ast \ast}$ $p < 8 \times 10^{-3}$ (Bonferroni), $^{\ast \ast}$ $p<0.01$, $^{\ast}$$p<0.05$; Wilcoxon Test \par}
\end{table*}

Our experiments are designed to answer the following: \textbf{(1)} Does MORetro$^\ast$ improve the performance of multi-objective synthesis planning compared to baseline methods in terms of Pareto front quality? \textbf{(2)} Does MORetro$^\ast$ maintain the success rate (finding at least one solution) compared to the single-objective baseline? \textbf{(3)} By embedding multiple conflicting objectives, does MORetro$^\ast$ improve diversity for Pareto-optimal synthesis routes in terms of reaction chemistry? \textbf{(4)} Which weight sampling strategy (if any) is preferred to guide the search? \textbf{(5)} What is the practical gain of search space pruning and how effective is the optimality guarantee?
\subsection{Experimental Setup}
\textbf{Datasets} We select different datasets to show that our method is not biased towards a specific data source. First, we benchmark our methodology on the widely-used USPTO-190 dataset \cite{chen2020retro}, a set of 190 molecular targets from the USPTO-full dataset. We also use the Pistachio-reachable dataset from \citet{Yu-2024} and a random selection of 150 drug-like molecules from ChEMBL \cite{wang2025interretro} (results for these datasets are shown in Appendix \ref{app:res}).

\textbf{Single-Step Algorithms} We select three single-step models to demonstrate that our method is independent of the underlying predictor: the template-based NeuralSym \cite{Yu-2024} and PDVN models \cite{Liu-2023, pdvn_model}, and the template-free Graph2Edits model from InterRetro \cite{wang2025interretro}. The models predict $K=25$ (or $K=50$ for PDVN) reactions per molecule.

\textbf{Multi-step Baselines} In the main body, we compare MORetro$^\ast$ to single-objective Retro$^\ast$ \cite{chen2020retro} and a scalarized baseline using a fixed weight vector ($\mathbf{w}_{\text{fixed}} = [.2, .2, .2, .4]$), placing more emphasis on the guidance objective. We refer to this baseline as \textit{Fixed}. To show that \textit{Fixed} is not cherry-picked, we present results for additional weight combinations in Appendix \ref{app:add_exp}. For Retro$^\ast$, we construct the Pareto front by extracting all synthesis routes $\{\Gamma_1, \ldots, \Gamma_l\}$ at search termination, computing their costs $\mathbf{C}(\Gamma)$, and extracting the front in $N_D$ dimensions. We impose a budget of 300 single-step expansions for all methods. Building blocks come from 23M commercially available molecules from \textit{eMolecules} \cite{chen2020retro}. Hyperparameter values for all weight-sampling strategies are provided in Appendix~\ref{app:hp}. The comparison to the MO-MCTS variant is shown in Appendix~\ref{app:mcts}.

\textbf{Computational Resources} All experiments were conducted on a workstation equipped with an NVIDIA RTX6000 and an AMD EPYC 7742. On average, the search terminates in 6--12 minutes depending on the single-step model.

\subsection{Results}
\begin{figure}[b!]
    \centering
    \includegraphics[width=0.7\columnwidth]{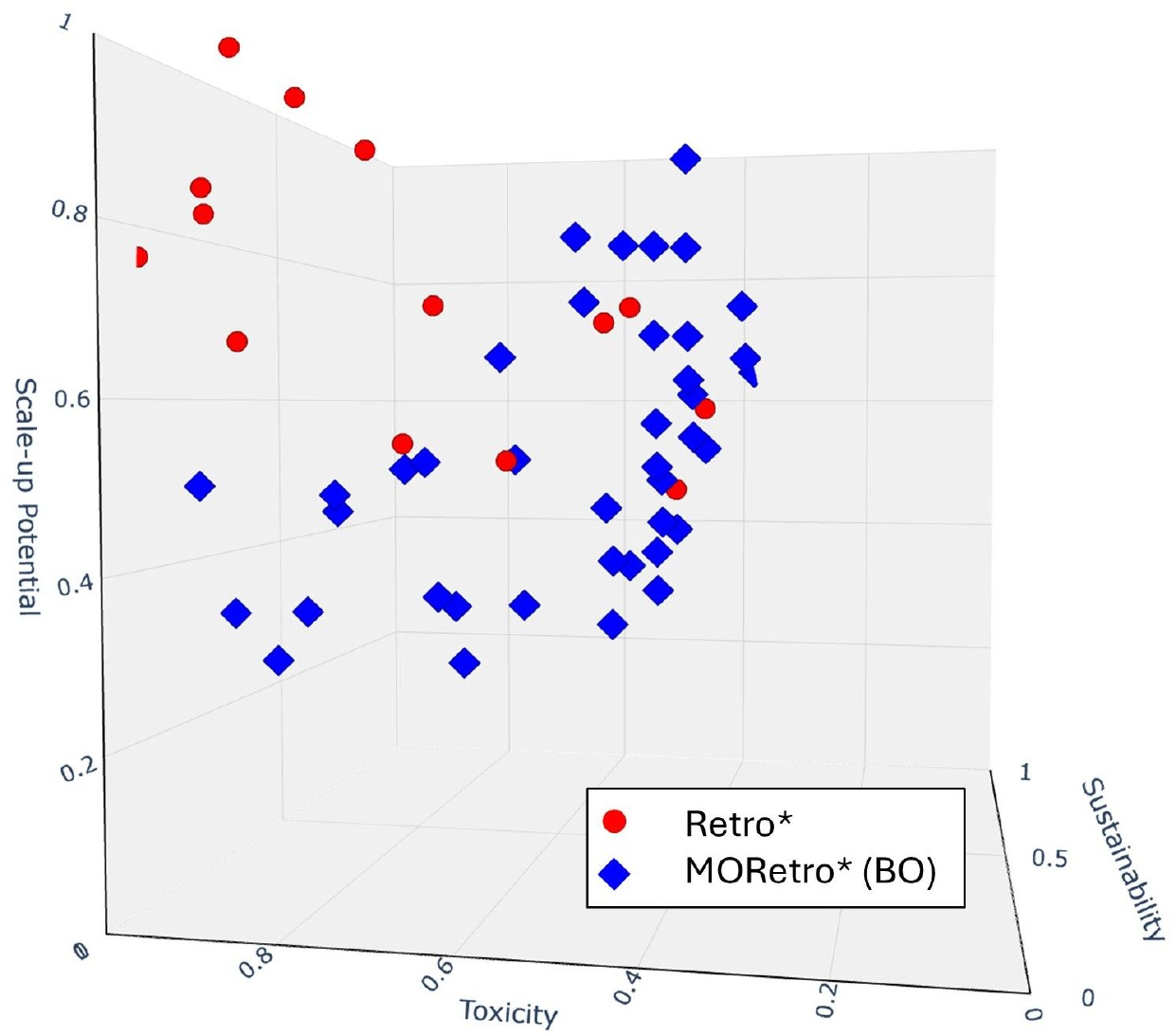}
    \caption{Visual comparison of two Pareto fronts returned by Retro$^\ast$ and MORetro$^\ast$ with hypervolume difference of $\Delta$HV=0.07}
    \label{fig:pareto_plots}
\end{figure}
\begin{figure*}[tp]
    \centering
    \includegraphics[width=\textwidth]{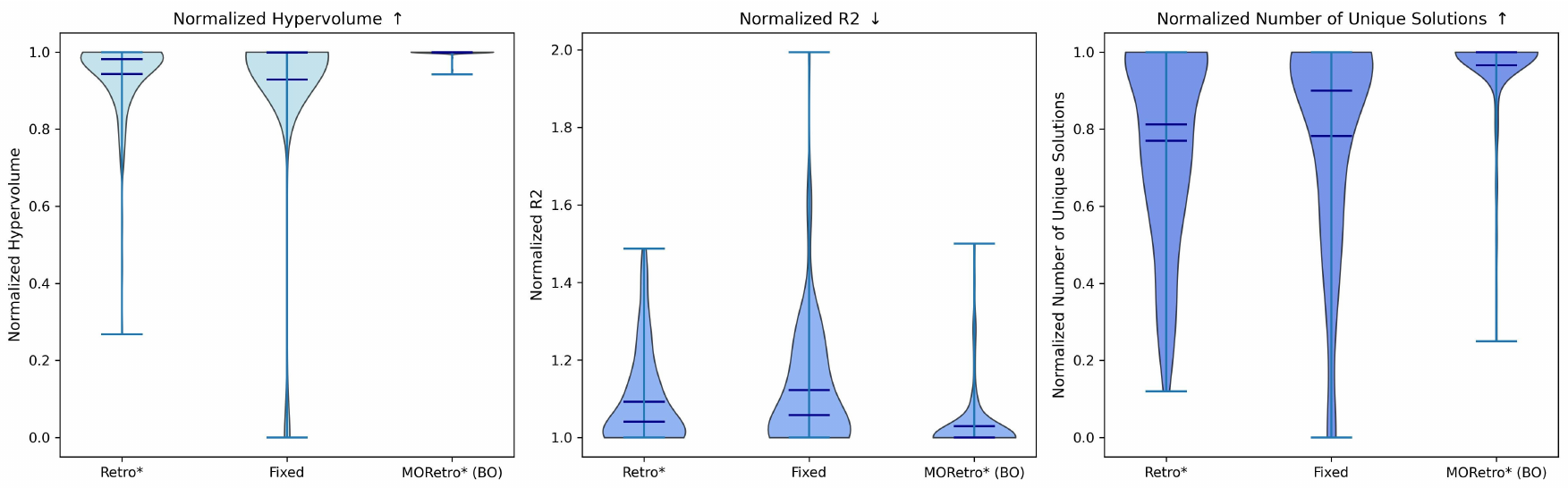}
    \caption{Per-molecule comparison of Pareto front metrics for the ChEMBL (G2E) experiment. Statistics are normalized according to the best value found per molecule.}
    \label{fig:violin}
\end{figure*}

We evaluate solution quality primarily using hypervolume (HV), which measures dominated objective space, and the R2 indicator \cite{r2_metric} to capture expected trade-off quality. Higher HV indicates better convergence toward the ideal point (zero cost), while lower R2 reflects improved Pareto coverage. Molecules without a synthesis route are assigned HV $=0$ and excluded from R2 analysis. All objectives are normalized using the 5th and 95th percentiles of the costs across all obtained synthesis routes. HV and R2 are computed with respect to the reference point $[1.1, 1.1, 1.1]$. We also report dominance coverage: \emph{Baseline Dominance} (\%) is the fraction of baseline solutions dominated by MORetro$^\ast$, and \emph{Self Dominance} (\%) is the fraction of MORetro$^\ast$ solutions dominated by the baseline.

\begin{table}[b]
\centering
\caption{Success rate (finding at least 1 synthesis route per molecule) of MORetro$^\ast$ versus baselines on USPTO-190}
\label{tab:succ_rate}
\resizebox{\columnwidth}{!}{%
\begin{tabular}{llcc}
\hline
Model & Method & Success Rate (\%) & $\Delta$Retro$^\ast$ \% \\ \hline
\multicolumn{1}{l}{\multirow{3}{*}{G2E}} & Retro$^\ast$ & 100 & - \\
\multicolumn{1}{l}{} & Fixed & 90.5 & -9.5 \\
\multicolumn{1}{l}{} & MORetro$^\ast$ (BO) & 97.9 & -2.1 \\ \hline
\multirow{3}{*}{PDVN} & Retro$^\ast$ & 98.9 & - \\
 & Fixed & 69.8 & -29.1 \\
 & MORetro$^\ast$ (BO) & 96.8 & -2.1 \\ \hline
\multirow{3}{*}{NeuralSym} & Retro$^\ast$ & 61.6 & - \\
 & Fixed & 47.4 & -14.2 \\
 & MORetro$^\ast$ (BO) & 60.0 & -1.6 \\ \hline
\end{tabular}
}
\end{table}

To address \textbf{(1)}, Table~\ref{tab:pareto_quality} summarizes the Pareto statistics. MORetro$^\ast$ consistently outperforms both Retro$^\ast$ and the Fixed baseline in terms of HV and R2, achieving HV improvements of approximately 5--8\%. All methods exhibit substantial variance, which is expected given the discrete nature of synthesis planning and the wide range of achievable objective values across molecules. To assess statistical significance, we perform paired Wilcoxon signed-rank tests on a per-molecule basis, comparing MORetro$^\ast$ against each baseline. All HV improvements are statistically significant with p-values $<0.01$. Figure~\ref{fig:violin} visually corroborates these findings, showing normalized per-molecule performance relative to the best observed value. Notably, MORetro$^\ast$ also produces the largest number of unique Pareto-optimal routes on average. In terms of dominance, MORetro$^\ast$ dominates approximately 18--36\% of baseline solutions, while only 2--8\% of its solutions are dominated by the baselines. Finally, Figure \ref{fig:pareto_plots} shows a HV improvement of roughly 10\%, visually confirming the significance of such an improvement. Results on additional datasets (Pistachio-reachable and ChEMBL) reported in Appendix~\ref{app:datasets} confirm these trends.

\textbf{Success Rate} Table~\ref{tab:succ_rate} addresses \textbf{(2)}. MORetro$^\ast$ exhibits only a marginal reduction in success rate compared to Retro$^\ast$, failing on just 2--3 additional molecules out of 190. In contrast, the Fixed baseline solves up to 25\% fewer targets. This highlights the importance of dynamically balancing the guidance objective: while the Fixed baseline assigns a constant weight of 0.4, MORetro$^\ast$ adaptively samples weights that emphasize guidance when needed, resulting in substantially higher success rates. 

\begin{table}[b!]
\centering
\renewcommand{\arraystretch}{1.05}
\caption{Chemical diversity (div.) of Pareto-optimal synthesis routes for USPTO-190. Max. DS is the maximum dissimilarity between a pair of routes.}
\label{tab:diversity}
\resizebox{\columnwidth}{!}{%
\begin{tabular}{llll}
\hline
Model & Method & Max. DS ($\uparrow$) & Div. Frac. ($\uparrow$)\\ \hline
\multirow{3}{*}{G2E} & Retro$^\ast$ & 0.80$\pm$0.26 & 0.43$\pm$0.26 \\
 & Fixed & 0.77$\pm$0.29 & 0.41$\pm$0.27 \\
 & MORetro$^\ast$ (BO) & \textbf{0.84}$\pm$0.23$^{\ast \ast}$ & \textbf{0.49}$\pm$0.23$^{\ast \ast}$ \\ \hline
\multirow{3}{*}{PDVN} & Retro$^\ast$ & 0.72$\pm$0.29 & 0.38$\pm$0.29 \\
 & Fixed & 0.67$\pm$0.31 & 0.34$\pm$0.30 \\
 & MORetro$^\ast$ (BO) & \textbf{0.76}$\pm$0.27$^{\ast \ast}$ & \textbf{0.41}$\pm$0.28 \\ \hline
\multirow{3}{*}{NeuralSym} & Retro$^\ast$ & 0.57$\pm$0.31 & 0.24$\pm$0.28 \\
 & Fixed & 0.58$\pm$0.33 & 0.28$\pm$0.30 \\
 & MORetro$^\ast$ (BO) & \textbf{0.64}$\pm$0.31$^{\ast}$ & \textbf{0.32}$\pm$0.29$^{\ast \ast}$ \\ \hline
\end{tabular}%
}
{\centering Statistical sign.: $^{\ast \ast}$ $p<0.01$, $^{\ast}$ $p<0.05$; Wilcoxon Test \par}
\end{table}

\begin{table*}[htp]

\centering
\caption{Search space reduction and PF quality when using $\mathbf{V}_{\text{bound}}$ for pruning on USPTO-190 (G2E). ``PO -- 100\% cut'': number (\%) of molecules whose frontier is fully pruned, certifying Pareto optimality. 95\% and 90\% columns analogous. $\varepsilon$-$\mathbf{V}_{\text{bound}}$ uses $\varepsilon=0.1$. Deltas for PF Quality (HV and R2) are relative to MORetro$^\ast$ (BO) in Table \ref{tab:pareto_quality}, with asterisks denoting the same statistical significance levels. }
\label{tab:pruning}
\renewcommand{\arraystretch}{1.2}
\resizebox{\textwidth}{!}{
\begin{tabular}{lccccccc}
\hline
 & \multicolumn{3}{c}{\textbf{Search Space Reduction in \# mols}} & \textbf{Search Space} & \multicolumn{2}{c}{\textbf{PF Quality}} & \textbf{Runtime} \\ \cline{2-8} 
Methods & PO -- 100\% cut & 95\% cut & 90\% cut & Average Reduction (\%) & HV ($\uparrow$) & R2 ($\downarrow$) & Model calls \\ \hline
$\mathbf{V}_\text{bound}$ & 11 (6\%) & 17 (9\%) & 39 (21\%) & $54 \pm 33$ & 1.045* (+0.005) & 0.41*** (-0.03) & $289 \pm 50$ \\
$\varepsilon$-$\mathbf{V}_\text{bound}$ & 32 (17\%) & 40 (21\%) & 65 (34\%) & $61 \pm 35$ & 1.045* (+0.005) & 0.43* (-0.01) & $265 \pm 83$
\end{tabular}
}
\end{table*}

\textbf{Diversity of Solutions} To answer \textbf{(3)}, we evaluate the chemical diversity of Pareto-optimal synthesis routes using a recently proposed pairwise similarity metric \cite{similarity_metric}. We report both the average maximum dissimilarity (DS) and the diversity fraction, defined as the proportion of route pairs with dissimilarity exceeding 0.5. Statistical significance is assessed using the same Wilcoxon test as above. As shown in Table~\ref{tab:diversity}, MORetro$^\ast$ consistently achieves higher diversity than both baselines, often with strong statistical significance. These results demonstrate that explicitly optimizing multiple objectives naturally leads to better exploration and more diverse synthesis plans.

\textbf{Sampling Strategy} We investigate \textbf{(4)} through an ablation study comparing the BO-based sampling strategy to Sobol and grid-based sampling (Tables~\ref{tab:sampling_strat} and \ref{tab:sampling_zero}, Appendix~\ref{app:res}). BO sampling consistently outperforms Sobol sampling across PF quality, success rate, and diversity. Comparisons with grid sampling are more nuanced: BO achieves clear gains in HV and success rate in two experiments, while performance is comparable in the remaining cases. This suggests that grid sampling can serve as a strong baseline, although BO sampling provides more robust improvements overall.

\textbf{Pruning and Optimality Guarantee} (\textbf{5}) Finally, we show the practical gain of using $\mathbf{V}_{\text{bound}}$ in Table \ref{tab:pruning}. The preliminary results for the USPTO-190 dataset show that pruning can lead to an effective reduction of the search space. Using the exact $\mathbf{V}_{\text{bound}}$ to prune open nodes, an average search space reduction of 54\% is observed. Notably, 21\% of the molecules have their search space reduced by 90\% (\textit{i.e.}, 90\% of open nodes are pruned) upon termination. We observe an improvement in the PF quality, with a statistically lower R2 score. Table \ref{tab:pruning} demonstrates that one can obtain a more aggressive pruning strategy by using an approximate $\varepsilon\text{-}\mathbf{V}_{\text{bound}}$. By pruning open nodes whose $\mathbf{V}_{\text{bound}}$ is dominated up to a slack of $\varepsilon$, the search space is on average decreased by 60\% upon termination. Furthermore, for 32 molecules, the algorithm certifies the $\varepsilon$-PF without deteriorating the PF quality compared to Table \ref{tab:pareto_quality}. This also translates to a lower overall runtime with 265 single-step model calls on average.

\section{Limitations and Future Outlook}
MORetro$^\ast$ assumes that reaction costs are additive, currently limiting objectives to those that decompose additively per reaction or molecule. A principled extension could use admissible additive lower bounds to guide search while evaluating the true non-additive cost only upon route completion, preserving theoretical guarantees via pruning. Furthermore, the practical heuristics currently employed are not provably admissible, meaning the pruning bound falls back to the zero vector in practice, reducing pruning efficiency. Constructing tight admissible lower bounds for such objectives remains an open challenge. In addition, the current study fixes the number of Pareto objectives at three; investigating how the algorithm scales to a larger set of diverse objectives is a natural next step. Future work will also address the robustness and chemical accuracy of the proposed objectives to support broader adoption of CASP within industry, as well as developing adaptive strategies to close the remaining success rate gap to single-objective search.

\section{Conclusion}
\label{sec:conclusion}
In this work, we introduce MORetro$^\ast$, a novel framework for multi-objective retrosynthetic planning. By combining weight scalarization with Bayesian optimization-informed sampling, MORetro$^\ast$ efficiently recovers high-quality Pareto fronts of synthesis routes under a fixed expansion budget. Across multiple datasets and single-step prediction models, our approach consistently improves Pareto front quality and chemical diversity over single-objective and fixed weight baselines, while largely preserving their success rates. By directly incorporating practical objectives into the planning process, MORetro$^\ast$ provides a flexible and effective foundation for accelerating decision-making, discovery, and the scale-up of industrial chemical synthesis.

\section*{Code Availability}
The code repository can be accessed via GitHub: \url{https://github.com/OptiMaL-PSE-Lab/MORetro}.

\section*{Acknowledgements}
We would like to thank the reviewers for the insightful discussions during peer review. We thank Emma Pajak and Mathias Neufang for proof-reading. Friedrich Hastedt thanks Alex Ganose for helpful discussions. Friedrich Hastedt acknowledges support from the Engineering and Physical Sciences Research Council (EPSRC) funding grant EP/S023232/1. We acknowledge computational resources and support provided by the Imperial College Research Computing Service (http://doi.org/10.14469/hpc/2232).

\section*{Impact Statement}
This paper presents work whose goal is to advance the field of machine learning. There are many potential societal consequences of our work, none of which we feel must be specifically highlighted here.


\bibliography{moretro}
\bibliographystyle{icml2026}

\newpage
\appendix
\onecolumn
\section{Single-Step Retrosynthesis and Condition Prediction Models}
This section provides a short conceptual description of the models utilized in this study. 

\textbf{NeuralSym / PDVN} treat one-step retrosynthesis as a template retrieval task. Given a product molecule and a library of templates, the model predicts a categorical distribution over all templates in the dataset, then applies up to $K$ templates to the product. Following the reaction rule, the product is converted into a set of reactants. NeuralSym serves as the base template model. PDVN adds a second neural network trained via reinforcement learning to reweight the likelihood of templates proposed by the base model \cite{Liu-2023}. In this paper, we refer to this reweighted likelihood of reaction templates as the \emph{policy cost}.

\textbf{Graph2Edits (G2E)} treats one-step retrosynthesis as a graph-editing sequence prediction task \cite{Zhong-2023}. Starting from the product molecule, the model autoregressively predicts the next action to apply to the molecular graph. The pre-defined edit vocabulary includes: Delete Bond, Change Bond, Change Atom, Attach Leaving Group, and Terminate. The model uses a Graph Neural Network (GNN) to predict the next action and requires a pre-defined leaving group vocabulary. The InterRetro framework trains G2E via reinforcement learning to reweight the likelihood of reactants and improve multi-step search success \cite{wang2025interretro}. We refer to the predicted likelihood from this model as the \emph{policy cost}.

\textbf{QUARC} is a reaction condition prediction model \cite{Sun-2025}. Given a product and corresponding reactants, it predicts temperature, agent identity (catalysts and solvents), and agent loading. The model is based on a GNN. In this work, we disregard agent loading and only use the prediction for temperature and agent identity.

\section{Proof of Theorem~\ref{thm:pareto}}
\label{app:proof}

We prove that Algorithm~\ref{alg:moretro}, with the termination criterion based on
the bound dominance condition (Definition~\ref{def:bound-condition}), returns $\hat{\mathcal{P}} = \mathcal{P}$, \textit{i.e.}, the true Pareto front.

\subsection{Connection to single-objective Retro$^\ast$}

Retro$^\ast$ \citep{chen2020retro} is a single-objective A$^\ast$ algorithm applied to retrosynthesis AND-OR graphs. In this setting, admissibility of the heuristic $V_m$ guarantees that Retro$^\ast$ returns an optimal solution, which also implies that $V_t(m)$ is itself a valid lower bound on the cost of any synthesis route passing through $m$.

\begin{lemma}[Retro$^\ast$ admissibility, single-objective \citep{chen2020retro}]
\label{lemma:retro*}
Let $V_m$ be a valid lower bound for all molecules $m$. Then Retro$^\ast$ is guaranteed to return an optimal solution if it terminates when the cost of a found route does not exceed
\[
\min_{m \in \mathcal{F}(G)} V_t(m).
\]
\end{lemma}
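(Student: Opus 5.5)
The statement to prove is Lemma~\ref{lemma:retro*}, the single-objective admissibility result of Retro$^\ast$. I will follow the standard A$^\ast$ argument, adapted to AND-OR graphs. The strategy is to show that, at every point of the search, $\min_{m \in \mathcal{F}(G)} V_t(m)$ lower-bounds the cost of every synthesis route not yet fully contained in $G$. Once that invariant holds, the termination rule immediately certifies optimality.

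\textbf{Step 1 (bottom-up bound).} I first show by induction over the DAG, from the frontier upward, that $rn(m|G)$ lower-bounds the cheapest cost of synthesizing $m$ using routes that agree with $G$ on expanded nodes. The parts of such a route below frontier nodes are arbitrary.
\begin{itemize}
\item At a frontier molecule, the claim is exactly the hypothesis that $V_m$ is a valid lower bound.
\item At a reaction node $R$, any route through $R$ must solve all children. Its cost is therefore $c(R)$ plus a sum of per-child costs, each of which is bounded below by the corresponding $rn(m|G)$, by nonnegativity and additivity of costs.
\item At an OR node, the route picks some child reaction, so the minimum over children is a lower bound.
\end{itemize}

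\textbf{Step 2 (top-down bound).} Next I show that, for a frontier molecule $m$, $V_t(m|G)$ lower-bounds the cost of any route to $t$ that uses $m$ as an open leaf.
\begin{itemize}
\item Take such a route $\Gamma$ and the chain $t = m_0, R_1, m_1, \ldots, R_k, m$ of ancestors of $m$ inside $\Gamma$.
\item The update $V_t(R) = rn(R) - rn(pr(R)) + V_t(pr(R))$ telescopes along this chain. It yields $rn(t)$ with each ancestor OR-node's $rn$ replaced by the $rn$ of the reaction the chain actually takes.
\item That quantity equals $\sum$ (reaction costs on the chain) $+ \sum$ ($rn$ of the sibling subtrees hanging off the chain) $+ rn(m)$. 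By Step 1, this is at most $C(\Gamma)$.
\item Taking the minimum over parents in the downward update only lowers the value, so it preserves the bound for the specific chain used by $\Gamma$.
\end{itemize}
This is the step I expect to be the main obstacle. The issue is that $G$ is a DAG rather than a tree, so a molecule may be reached through several parents, and a route might share subtrees. I would handle this by unfolding the route into its tree form, where costs are counted per occurrence, and applying the telescoping identity along one chosen path.

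\textbf{Step 3 (conclusion).} Let $\Gamma^\ast$ be an optimal route. Either all of $\Gamma^\ast$ is already in $G$ and solved, in which case the algorithm's recorded minimum-cost solved route has cost at most $C(\Gamma^\ast)$. Or $\Gamma^\ast$ has some open leaf $m \in \mathcal{F}(G)$, in which case Step 2 gives $C(\Gamma^\ast) \ge V_t(m) \ge \min_{m' \in \mathcal{F}(G)} V_t(m')$. At termination, the found route $\hat\Gamma$ satisfies $C(\hat\Gamma) \le \min_{m'} V_t(m') \le C(\Gamma^\ast)$, so $\hat\Gamma$ is optimal.

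One remaining point needs a brief check: a route could reach a molecule that is a building block yet has been expanded. To cover this, I treat leaves as either building blocks, which have zero heuristic and are solved, or frontier nodes. I also note that nonnegative costs make the bound monotone under expansion.
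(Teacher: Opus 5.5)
Your argument is correct under the conventions listed in the next paragraph. It differs from the paper, which gives no proof of this lemma: it imports it from \citet{chen2020retro} and presents it as an instance of A$^\ast$ admissibility. You instead give a self-contained certificate argument: a bottom-up induction for $rn$, a telescoping of the downward update along the ancestor chain of an open leaf, and then the halting rule as the certificate. This buys three things the citation does not.
First, your proof never uses the selection rule $\argmin_{m\in\mathcal{F}(G)}V_t(m|G)$, only the halting test. That is why the same reasoning carries over to Theorem~\ref{thm:pareto}, where frontier nodes are chosen by several scalarizations and only termination enters the guarantee.
Second, your Step~2 proves directly the fact the paper relies on later: $V_t(m|G)$ lower-bounds every route through a frontier molecule $m$. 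The paper treats this as a consequence of the optimality statement, whereas logically it is the ingredient.
Third, your chain decomposition (reaction costs on the chain, plus $rn$ of the siblings, plus $rn(m)$) is the precise form of the top-down step in the proof of Lemma~\ref{lem:vbound-admissible}. As written, that step invokes only admissibility of $\mathbf{V}_\text{bound}(pr(R)|G)$. This cannot by itself give admissibility of $\mathbf{V}_\text{bound}(R|G)$, because the nonnegative gap $\mathbf{pn}(R|G)-\mathbf{pn}(pr(R)|G)$ is added. The citation, in exchange, is short and places the result within standard A$^\ast$ theory.

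To make the proof airtight, state three conventions explicitly.
\begin{enumerate}
\item Optimality is relative to the search space of the fixed single-step model. Then at each expanded molecule every route uses a reaction already inserted into $G$, which is what makes your Step~3 case split exhaustive and what ``routes that agree with $G$'' in Step~1 silently assumes.
\item The returned route must be the cheapest solved route in $G$. Otherwise your first case fails, since a costlier solved route can also satisfy the halting inequality while $\Gamma^\ast$ sits fully solved in $G$.
\item Route costs are counted per occurrence, as in Retro$^\ast$'s search tree. Your DAG unfolding needs this: with the set-based $C(\Gamma)=\sum_i c(R_i)$, an intermediate consumed twice in $\Gamma$ is charged twice by $rn$ but once by $C$. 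Bounding the unfolded cost then does not bound $C(\Gamma)$.
\end{enumerate}
Finally, drop the closing remark on monotonicity. Nonnegative costs alone do not make $\min_{m\in\mathcal{F}(G)}V_t(m|G)$ nondecreasing under expansion; that requires a consistent heuristic, and your proof does not use it anyway.
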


\subsection{Multi-objective admissibility of $\mathbf{V}_\text{bound}$}

\begin{lemma}[Admissibility of $\mathbf{V}_\text{bound}$]
\label{lem:vbound-admissible}
Assume that for all frontier molecules $m \in \mathcal{F}(G)$, a valid lower
bound $\mathbf{V}_m$ is known. Then, for any molecule $m$,
\[
\mathbf{V}_\text{bound}(m|G) \preceq \mathbf{C}(\Gamma),
\]
for any feasible synthesis route $\Gamma$ passing through $m$. In other words,
$\mathbf{V}_\text{bound}$ is component-wise admissible.
\end{lemma}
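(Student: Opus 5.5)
The plan is to reduce the vector statement to $N_D$ scalar statements, one per objective $k$, and prove each by structural induction over the search DAG $G$. The component-wise definitions of $\mathbf{pn}$ and $\mathbf{V}_\text{bound}$ make this possible. For a fixed feasible route $\Gamma$, recall that $\Gamma$ is an AND-tree: every molecule in it, except the building blocks, has exactly one producing reaction in $\Gamma$. For any molecule $n$ on $\Gamma$, write $\Gamma_n$ for the sub-route of $\Gamma$ rooted at $n$.

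The first step is to show a bottom-up bound for every molecule $n$ of $G$ that lies on $\Gamma$:
\[
\mathbf{pn}(n|G) \preceq \mathbf{C}(\Gamma_n).
\]
The induction runs on the depth of $n$ in $G$, from the leaves upward, which is well founded because $G$ is a DAG.
\begin{itemize}
\item If $n \in \mathcal{F}(G)$, then $\mathbf{pn}(n|G)=\mathbf{V}_m$. This is a valid lower bound on the cost of synthesizing $n$, and so it is bounded by $\mathbf{C}(\Gamma_n)$.
\item If $n$ is expanded, let $R$ be the reaction $\Gamma$ uses for $n$. If $R \in ch(n)$ in $G$, the induction hypothesis applied to the reactants of $R$ gives
\[
\mathbf{pn}(R|G) = \mathbf{c}(R) + \sum_{m' \in ch(R)} \mathbf{pn}(m'|G) \preceq \mathbf{c}(R) + \sum_{m'} \mathbf{C}(\Gamma_{m'}) = \mathbf{C}(\Gamma_n).
\]
The component-wise minimum over $ch(n)$ is then no larger in each coordinate.
\end{itemize}

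The subtle case is a route $\Gamma$ whose reaction at an expanded $n$ is not in $G$. Since expansion with the fixed single-step model adds all $K$ predictions, and routes are only defined over reactions the model can produce, the reaction $\Gamma$ uses at an expanded node must already be present. I will state this explicitly as the meaning of ``for a given single-step model''. Non-negativity of costs and of $\mathbf{V}_m$ handles the case where a shared molecule appears several times in $\Gamma$ and the DAG counts it once, because dropping extra non-negative terms only lowers the bound.

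The second step is the top-down part, which carries the same bound through the Eq.~\ref{eq:value} propagation. The claim is that for a molecule $m$ on $\Gamma$, $\mathbf{V}_\text{bound}(m) \preceq \mathbf{C}(\Gamma)$. Induct from the root: at $t$, $\mathbf{V}_\text{bound}(t)=\mathbf{pn}(t)\preceq\mathbf{C}(\Gamma)$ by the first step. For a reaction $R$ on $\Gamma$ with parent $p$, the rule is
\[
\mathbf{V}_\text{bound}(R)=\mathbf{pn}(R)-\mathbf{pn}(p)+\mathbf{V}_\text{bound}(p).
\]
The main obstacle is here. The difference $\mathbf{pn}(R)-\mathbf{pn}(p)$ is non-negative component-wise, but a naive inductive bound of $\mathbf{V}_\text{bound}(p)$ by $\mathbf{C}(\Gamma)$ loses the information needed to absorb that difference.

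To get around this, I will prove the stronger invariant
\[
\mathbf{V}_\text{bound}(n) - \mathbf{pn}(n) \preceq \mathbf{C}(\Gamma) - \mathbf{C}(\Gamma_n)
\]
for $n$ on $\Gamma$. In words, the ``outside'' part of the bound never exceeds the true cost of $\Gamma$ outside the subtree at $n$. Given this invariant, the reaction update gives
\[
\mathbf{V}_\text{bound}(R)-\mathbf{pn}(R) = \mathbf{V}_\text{bound}(p)-\mathbf{pn}(p) \preceq \mathbf{C}(\Gamma)-\mathbf{C}(\Gamma_p),
\]
and $\Gamma_p=\Gamma_R$, so the invariant passes through reactions. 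For a child molecule $m$ of $R$, note that $\mathbf{pn}(R)-\mathbf{pn}(m)$ equals $\mathbf{c}(R)$ plus the siblings' $\mathbf{pn}$ values. Each sibling's $\mathbf{pn}$ is bounded by its true sub-route cost from the first step, so the invariant passes to $m$ along the edge of $\Gamma$. Taking the component-wise minimum over parents $pr(m)$ only decreases the value.

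Adding the first step to the invariant yields $\mathbf{V}_\text{bound}(m)\preceq\mathbf{C}(\Gamma)$. One remaining point is the scalar Retro$^\ast$ convention that the value of a molecule is defined via the minimizing parent reaction, and that the component-wise minimum mixes coordinates across different routes. Because every statement above is coordinate-wise, the mixing is harmless, and I would note this when invoking Lemma~\ref{lemma:retro*} per coordinate as a sanity check.
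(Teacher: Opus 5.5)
Your argument has the same two-stage skeleton as the paper's proof: admissibility of $\mathbf{pn}$ from the leaves up, then an induction from $t$ through the rules of Eq.~\ref{eq:value}, with the minimum over $pr(m)$ handled the same way. The reaction-node step, however, is genuinely different, and it is where your version improves on the paper. The paper concludes that $\mathbf{V}_\text{bound}(R)=\mathbf{V}_\text{bound}(pr(R))+[\mathbf{pn}(R)-\mathbf{pn}(pr(R))]$ is admissible simply because $\mathbf{pn}$ and $\mathbf{V}_\text{bound}(pr(R))$ are, plus a per-coordinate appeal to Lemma~\ref{lemma:retro*}. As you note, this does not follow: the bracket is a non-negative increment, and a bound $\mathbf{V}_\text{bound}(p)\preceq\mathbf{C}(\Gamma)$ leaves no slack to absorb it. Your invariant $\mathbf{V}_\text{bound}(n)-\mathbf{pn}(n)\preceq\mathbf{C}(\Gamma)-\mathbf{C}(\Gamma_n)$ is exactly what closes the induction. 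It is unchanged across a reaction node, and at a child molecule the extra $\mathbf{c}(R)$ plus the sibling $\mathbf{pn}$ values are paid for by the sibling sub-routes through your first step. The paper's version is shorter; yours actually proves the step. It also makes explicit the hidden hypothesis that the fixed model puts $\Gamma$'s reaction at every expanded node into $G$.

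Two points still need repair.

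First, your remark on shared molecules gets the direction wrong. On a DAG the recursion $\mathbf{pn}(R)=\mathbf{c}(R)+\sum_{m'\in ch(R)}\mathbf{pn}(m')$ evaluates the tree unfolding, so an intermediate consumed by two different reactions is counted twice, not once. The identity $\mathbf{c}(R)+\sum_{m'}\mathbf{C}(\Gamma_{m'})=\mathbf{C}(\Gamma_n)$ that you use in both steps is precisely the convention that $\mathbf{C}$ is evaluated on the route tree with multiplicity. You must adopt it as a hypothesis rather than argue it away. Under the paper's literal set definition $\Gamma=\{R_1,\dots,R_n\}$ the lemma is false. Take $G$ consisting of $R$ (making $t$ from $a,b$), $R_a$ (making $a$ from $x$), $R_b$ (making $b$ from $x$) and $R_x$ (making $x$ from building blocks). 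Then $\mathbf{V}_\text{bound}(t)=\mathbf{pn}(t)=\mathbf{C}(\Gamma)+\mathbf{c}(R_x)$. Your ``dropping extra terms'' reasoning is only right for a reactant repeated within one reaction.

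Second, your top-down induction only reaches molecules $m$ whose ancestors along $\Gamma$ are all expanded with $\Gamma$'s reactions present in $G$. Suppose $\Gamma$ meets some $m\in G$ only below a frontier molecule. Then $\mathbf{V}_\text{bound}(m)$ is inherited from $m$'s own parents in $G$ and satisfies $\mathbf{V}_\text{bound}(m)\succeq\min_{R'\in pr(m)}\mathbf{c}(R')$, which can exceed $\mathbf{C}(\Gamma)$ in some coordinate. So the statement as written (``any route passing through $m$'') fails, and you should restrict it to such $m$. That restricted form is all Theorem~\ref{thm:pareto} needs, since any route not contained in $G$ has a frontier molecule all of whose ancestors along the route are expanded.
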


\begin{proof}
Our definitions of $\mathbf{V}_\text{bound}$ and $\mathbf{pn}$ generalize the
single-objective $V_t$ and $rn$ in Retro$^\ast$ to $N_D$ objectives such that:
\[
\mathbf{V}_\text{bound} = [V_{t,1}, \dots, V_{t,{N_D}}].
\]
By Lemma~\ref{lemma:retro*}, admissibility holds for each individual objective.

Moreover, one can easily show that $\mathbf{V}_\text{bound}$ is component-wise admissible: by induction
from the root node $t$ to all descendant reactions and molecules:

\emph{Root node:} $\mathbf{V}_\text{bound}(t|G) = \mathbf{pn}(t|G)$ is admissible
(component-wise). From Equation~\ref{eq:pruning}, $\mathbf{pn}$ at each node is recursively computed as the sum over reaction costs and child pruning numbers. At frontier molecules $\mathbf{pn}(m|G) = \mathbf{V}_m$ is an admissible lower bound, by assumption. Thus, $\mathbf{pn}(\cdot)$ is admissible for all nodes in $G$.

\emph{Reaction nodes:} For a reaction $R$,
\[
\mathbf{V}_\text{bound}(R|G) = \mathbf{pn}(R|G) - \mathbf{pn}(pr(R)|G) + \mathbf{V}_\text{bound}(pr(R)|G),
\]
the subtraction isolates the cost contribution of $R$ and its descendants. Since
both $\mathbf{pn}$ and $\mathbf{V}_\text{bound}(pr(R)|G)$ are admissible, so is
$\mathbf{V}_\text{bound}(R|G)$.

\emph{Molecule nodes:} For a molecule $m$,
\[
\mathbf{V}_\text{bound}(m|G) = \min_{R \in pr(m)} \mathbf{V}_\text{bound}(R|G),
\]
the component-wise minimum preserves admissibility because any feasible route
must pass through at least one parent reaction.

By induction, $\mathbf{V}_\text{bound}$ is component-wise admissible for all nodes.
\end{proof}

\subsection{Proof of Theorem~\ref{thm:pareto}}

\begin{proof}[Proof sketch]
Let $\hat{\mathcal{P}}$ denote the set of Pareto-optimal routes discovered at termination.

\paragraph{Soundness.}  
All routes in $\hat{\mathcal{P}}$ are explicitly constructed and non-dominated, so they are Pareto-optimal.

\paragraph{Completeness.}
Suppose a Pareto-optimal route $\Gamma^\ast \notin \hat{\mathcal{P}}$. Then $\Gamma^\ast$ must pass through at least one frontier molecule $m \in \mathcal{F}(G)$ at termination. By the termination criterion, there exists a discovered route $\hat{\Gamma} \in \hat{\mathcal{P}}$ such that
\[
\mathbf{C}(\hat{\Gamma}) \prec \mathbf{V}_\text{bound}(m).
\]
By Lemma~\ref{lem:vbound-admissible}, $\mathbf{V}_\text{bound}(m)$ lower-bounds the cost of any route through $m$, including $\Gamma^\ast$:
\[
\mathbf{C}(\hat{\Gamma}) \prec \mathbf{V}_\text{bound}(m) \preceq \mathbf{C}(\Gamma^\ast).
\]
By transitivity, $\mathbf{C}(\hat{\Gamma}) \prec \mathbf{C}(\Gamma^\ast)$, meaning $\hat{\Gamma}$ dominates $\Gamma^\ast$. This contradicts the Pareto-optimality of $\Gamma^\ast$. Hence, all Pareto-optimal routes are discovered, proving completeness.
\end{proof}

\textbf{Remark.}  
Since reaction costs are non-negative, the zero vector $\mathbf{0}$ is always a valid lower bound for $\mathbf{V}_m$ for any molecule $m$.

\section{Objective Functions}
\label{app:obj}
This section provides a detailed description of the objectives used throughout this work.

\textbf{Sustainability Objective.}\\
The sustainability objective is defined as the average of two components: reaction temperature and a simplified atom economy metric. Atom economy is computed as the ratio of heavy atoms in the product molecule to the total number of heavy atoms in the reactant molecules,
\begin{equation}
AE = \frac{p_{\text{atoms}}}{r_{\text{atoms}}}.
\end{equation}

Reaction temperature is incorporated via a piecewise penalty function \(C(T)\) that reflects increasing energetic cost outside ambient conditions:
\begin{equation}
C(T) =
\begin{cases}
0,    & 15 \leq T \leq 25, \\
0.25, & 10 \leq T < 15 \;\text{or}\; 25 < T \leq 40, \\
0.6,  & -20 \leq T < 10, \\
1.0,  & T < -20, \\
0.4,  & 40 < T \leq 120, \\
0.8,  & T > 120.
\end{cases}
\end{equation}

The overall sustainability cost of a reaction \(R_i\) is then defined as
\begin{equation}
C_{\text{Sust}}(R_i) = 0.5\,C(T) + 0.5\,(1 - AE),
\end{equation}
where lower values correspond to more sustainable reactions.

Estimating the future cost is extremely difficult. Consequently, we employ a simple synthetic accessibility metric (SAScore \cite{Ertl-2009}). The metric acts as a proxy for the number of reactions remaining towards building blocks. The more reactions remaining, the more likely it is to incur higher cost of synthesis (also in terms of operational sustainability, each new reaction will require additional operating costs). The heuristic is thus calculated as: 
\begin{equation}
    V_{m,\text{Sust}} = \frac{\text{SAScore}(m)}{10},
\end{equation}
The SAScore is normalized between 0--1 by dividing it by its upper bound (10).

\textbf{Toxicity (Hazardousness).}\\
The toxicity objective captures the hazardousness of a reaction through the toxicity of auxiliary agents, including catalysts and solvents. Here, toxicity refers to both adverse effects on human health and environmental harm. Each agent provided by the QUARC predictor is assigned a toxicity cost based on external references. In particular, solvent toxicity scores are derived from the solvent selection and recommendation guide by \citet{solvent_tox}, while catalyst toxicity scores are informed by a recent study on the greenness and toxicity of heterogeneous catalysts \cite{cat_toxicity}. Using these sources, toxicity costs were assigned to $\sim$150 out of 1,375 agents.

To score the remaining agents, we leverage GPT-4o, following prior work demonstrating the effectiveness of large language models for chemical property annotation tasks \cite{Mirza2025}. A high-level description of the prompting procedure is provided in Appendix~\ref{app:prompt}. All agent toxicity costs are normalized to the interval \([0,1]\), where non-hazardous agents receive a cost of 0 and the most hazardous agents receive a cost of 1. The resulting reaction-level toxicity cost is denoted by \(C_{\text{Tox}}(R_i)\).

In addition to agent toxicity, we account for the intrinsic toxicity of reactant molecules. Since accurately estimating the toxicity of future intermediates is virtually impossible, we instead penalize the presence of toxic moieties in reactants that do not contribute to the final product. This reflects the intuition that toxic substructures are likely to persist across building blocks and precursors. We quantify molecular toxicity using the EToxPred model \cite{Pu-2019}, defining the toxicity value of a molecule \(m\) as
\begin{equation}
    V_{m,\text{Tox}} = \text{EToxPred}(m),
\end{equation}
which corresponds to the predicted probability of the molecule being toxic.

\textbf{Scale-up Potential.}\\
This objective captures the feasibility of scaling a reaction to industrial settings, where major cost drivers include downstream separation and the cost of starting materials. To approximate separation difficulty, we adopt a common heuristic based on the partition coefficient \(\log(P)\), which is widely used to assess extractive separability. Specifically, we employ a learned \(\log(P)\) predictor and compute the \emph{ExtractionScore} of a reaction mixture following \citet{Kuznetsov-2021}.

For a given reaction, we compute the absolute \(\log(P)\) differences between the product and each reactant (solvents are currently ignored but could be incorporated straightforwardly). Let \(\Delta_m = \lvert \log(P_{\text{prod}}) - \log(P_m) \rvert\) denote the difference for reactant \(m \in r_i\). The average difference is then
\begin{equation}
    P_{\text{diff}} = \frac{1}{|r_i|} \sum_{m \in r_i} \Delta_m ,
\end{equation}
where \(r_i\) denotes the set of reactants of reaction \(R_i\).

We convert \(P_{\text{diff}}\) into a separation cost using a threshold-based penalty function \(C(P_{\text{diff}})\), where larger \(\log(P)\) differences indicate easier separation and thus lower cost:
\begin{equation}
C_{\text{Scale}}(R_i) = C(P_{\text{diff}}) =
\begin{cases}
0.0, & P_{\text{diff}} \geq 3.0 \quad \text{(excellent separation)} \\
0.2, & 2.5 \leq P_{\text{diff}} < 3.0 \\
0.4, & 2.0 \leq P_{\text{diff}} < 2.5 \\
0.6, & 1.0 \leq P_{\text{diff}} < 2.0 \\
0.8, & 0.5 \leq P_{\text{diff}} < 1.0 \\
1.0, & P_{\text{diff}} < 0.5 \quad \text{(very poor separation)}.
\end{cases}
\end{equation}

To estimate future economic cost, we employ a heuristic analogous to the synthetic accessibility (SA) score, but based on predicted market prices. We use MolPrice \cite{Hastedt-2025}, which captures both synthesizability and economic cost, to estimate the price of a molecule. The scale-up value of a molecule \(m\) is defined as the normalized predicted price:
\begin{equation}
    V_{m,\text{Scale}} = \frac{\text{MolPrice}(m)}{15},
\end{equation}
where the normalization constant is chosen based on empirical bounds, with 0 corresponding to the lowest and 15 to the highest observed prices.

\textbf{Guidance Objective}\\
The purpose of this objective is to guide the search towards available building blocks. The objective is inherited from existing multi-step algorithms. In particular, for all three single-step models investigated, the guidance cost is defined as: 

\begin{equation}
    C_{\text{Guid}} = \frac{-\ln{P_{ret}(R_i)}}{10},
\end{equation}
where $P_{ret}(R_i)$ is the likelihood of a reaction $R_i$ occurring, calculated by the single-step model. The cost is divided by 10 and then clipped to a range of $[0,1]$.

The heuristic for NeuralSym is inherited from \citet{chen2020retro}; please refer to the paper for more information. For G2E and PDVN, the heuristic $V_{m,\text{Guid}}$ is set to zero. This is because the single-step models are trained with feedback from the environment via RL. This means that the resulting policy (the single-step model itself) is already aware of promising synthesis pathways, \textit{i.e.}, those with all leaf nodes as building block molecules.


\section{Search Hyperparameters}
\label{app:hp}
Below we summarize the search hyperparameters used for the different weight-sampling strategies investigated in this work.

For all experiments, once a building block is reached, its future value estimate is set to $\mathbf{V}_m=\mathbf{0}$. The single-step retrosynthesis model returns $K=25$ candidate reactions per molecule (apart from PDVN where $K=50$), while the reaction condition predictor proposes two likely conditions per reaction.

\subsection{Bayesian Optimization.}
The BO-based sampling strategy uses the following global parameters:
\begin{itemize}
\item Weight iteration frequency/budget ($w_{\text{budget}}$): 12
\item Number of parallel weights per iteration ($N_S$): 5
\end{itemize}

\textbf{BO Strategy} \\
In addition to these global settings, BO incorporates a decay mechanism that reduces the influence of previously explored regions of the weight space. While a weight vector may initially yield large improvements, repeatedly sampling its local neighborhood is unlikely to further improve the Pareto front once that trade-off region has been sufficiently explored.

To account for this effect, we apply an age-dependent decay to the utility signal used to train the surrogate model. Let $\tau_j$ denote the number of BO iterations since weight $\mathbf{w}_j$ last produced a Pareto improvement. The decayed utility is defined as
\begin{equation}
u_j = \lambda^{\tau_j} \cdot u_j^{(0)},
\end{equation}
where $\lambda \in (0,1)$ is a decay factor. If $\tau_j$ exceeds a maximum age $\tau_{\max}$, the utility is set to zero. The initial utility $u_j^{(0)}$ is computed as the hypervolume improvement induced by $\mathbf{w}_j$:
\begin{equation}
u_j^{(0)} = \Delta HV_j = HV_{\text{new}} - HV_{\text{old}}.
\end{equation}

This formulation discourages repeated sampling in the vicinity of previously successful weights and promotes exploration of under-explored regions of the simplex.

To initialize the surrogate model with informative utility observations, we employ a warm-up phase based on deterministic grid sampling. During this phase, weight vectors $\mathbf{w}$ are drawn from a coarse grid over the ($N_D-1$)-dimensional simplex:

\begin{equation}
    \mathbf{w} \in \{0, 0.25, 0.5, 0.75, 1\}^{N_D} \quad \text{subject to} \quad \sum_{k=1}^{N_D} w_k = 1.
\end{equation}

To bias early exploration toward feasible synthesis routes, we further restrict the initial weights by enforcing a minimum mass on the guidance-related objective. Specifically, we retain only grid points where the weight assigned to the guidance objective (the last dimension) satisfies $w_{N_D} \ge 0.5$ during the warm-up phase.

After the warm-up phase, candidate weight vectors are generated either from a refined grid or via Sobol sampling. These candidates define the search space for Bayesian optimization, from which the BO acquisition function selects the next batch of weights to evaluate.

\paragraph{BO Hyperparameters.}
\begin{itemize}
\item Number of warm-up weights: 10
\item Utility decay factor ($\lambda$): 0.5
\item Maximum age ($\tau_{\max}$): 2
\item Surrogate model: Gaussian Process with RBF kernel
\item Kernel lengthscale: [0.05, 0.5]
\end{itemize}

\subsection{Grid Sampling}
Grid sampling is conceptually much simpler compared to BO. We sample weights from a coarse grid over the $N_D-1$ simplex with resolution 0.33, i.e., 

\begin{equation}
    \mathbf{w} \in \{0, \tfrac{1}{3}, \tfrac{2}{3}, 1\}^{N_D} \quad \text{subject to} \quad \sum_{k=1}^{N_D} w_k = 1.
\end{equation}

The global parameters are:
\begin{itemize}
\item Weight iteration frequency/budget ($w_{\text{budget}}$): 16
\item Number of parallel weights per iteration ($N_S$): 5
\end{itemize}

\subsection{Sobol (Random)}
Sobol sampling draws weights from the Sobol sequence. The global parameters are:
\begin{itemize}
    \item Weight iteration frequency/budget ($w_{\text{budget}}$): 10
    \item Number of parallel weights per iteration ($N_S$): 5
    \item Total samples drawn from Sobol sequence: 32 (+ 4 extreme points)
\end{itemize}

\newpage
\section{Supplementary Results}
\label{app:res}

\subsection{Diverse Datasets}
\label{app:datasets}
We also tested the G2E and NeuralSym models on the ChEMBL and Pistachio-reachable datasets, with the findings presented in the tables below.

\begin{table}[h]
\renewcommand{\arraystretch}{1.05}
\centering
\caption{Summary of multi-objective performance on the ChEMBL and Pistachio-reachable datasets for three different single-step models. Our algorithm with BO sampling is compared to the Fixed and single-objective Retro$^\ast$ baselines.}
\label{tab:pareto_sup}
\begin{tabular}{llllcc}
\hline
 &  & \multicolumn{2}{c}{Pareto front Statistics} & \multicolumn{2}{c}{MORetro$^\ast$ Dominance} \\ \cline{3-6} 
Model & Method & HV ($\uparrow$) & R2 ($\downarrow$) & Base. Dom. \% ($\uparrow$) & Self Dom. \% $(\downarrow)$ \\ \hline
\multirow{3}{*}{\begin{tabular}[c]{@{}l@{}}G2E\\ (ChEMBL)\end{tabular}} & Retro$^\ast$ & 0.97 $\pm$ 0.40 & 0.49 $\pm$ 0.14 & 26\% & 9\% \\
 & Fixed & 0.98 $\pm$ 0.42 & 0.49 $\pm$ 0.12 & 11\% & 2\% \\
 & MORetro$^\ast$ (BO) & \textbf{1.02} $\pm$ 0.38$^{\ast \ast \ast}$ & \textbf{0.45} $\pm$ 0.14$^{\ast \ast \ast}$ & - & - \\ \hline
\multirow{3}{*}{\begin{tabular}[c]{@{}l@{}}NeuralSym\\ (Pistachio-reachable)\end{tabular}} & Retro$^\ast$ & 1.03 $\pm$ 0.25 & 0.48 $\pm$ 0.12 & 17\% & 1\% \\
 & Fixed & 1.05 $\pm$ 0.26 & 0.46 $\pm$ 0.12 & 4\% & 1\% \\
 & MORetro$^\ast$ (BO) & \textbf{1.08} $\pm$ 0.22$^{\ast \ast \ast}$ & \textbf{0.44} $\pm$ 0.14$^{\ast \ast \ast}$ & - & - \\ \hline
\end{tabular}%
\\{\centering Statistical significance MORetro$^\ast$ vs. baselines: $^{\ast \ast \ast}$ $p < 8 \times 10^{-3}$ (Bonferroni), $^{\ast \ast}$ $p<0.01$, $^{\ast}$$p<0.05$; Wilcoxon Test \par}
\end{table}

\begin{table}[h]
\renewcommand{\arraystretch}{1.11}
\centering
\caption{Success rate (finding at least 1 synthesis route per molecule) of MORetro$^\ast$ versus baselines on ChEMBL and Pistachio-reachable}
\label{tab:success_sup}
\begin{tabular}{llll}
\hline
Model & Method & Success Rate (\%) & $\Delta$Retro$^\ast$ \% \\ \hline
\multirow{3}{*}{{\begin{tabular}[c]{@{}l@{}}G2E\\ (ChEMBL)\end{tabular}}} & Retro$^\ast$ & 93.3 & - \\
 & Fixed & 86.0 & -7.3 \\
 & MORetro$^\ast$ (BO) & 91.3 & -2.0 \\ \hline
\multirow{3}{*}{\begin{tabular}[c]{@{}l@{}}NeuralSym\\ (Pistachio-reachable)\end{tabular}} & Retro$^\ast$ & 98.6 & - \\
 & Fixed & 97.9 & -0.7 \\
 & MORetro$^\ast$ (BO) & 99.3 & +0.7 \\ \hline
\end{tabular}
\end{table}

\begin{table}[h]
\renewcommand{\arraystretch}{1.11}
\centering
\caption{Chemical diversity (div.) of Pareto-optimal synthesis routes. DS is the dissimilarity between a pair of routes.}
\label{tab:diversity_sup}
\begin{tabular}{llll}
\hline
Model & Method & Max. DS ($\uparrow$) & Div. Frac. ($\uparrow$) \\ \hline
\multirow{3}{*}{G2E} & Retro$^\ast$ & 0.75$\pm$0.25 & 0.39$\pm$0.24 \\
 & Fixed & 0.79$\pm$0.24 & 0.46$\pm$0.25 \\
 & MORetro$^\ast$ (BO) & \textbf{0.82}$\pm$0.23$^{\ast \ast}$ & \textbf{0.48}$\pm$0.23$^{\ast \ast}$ \\ \hline
\multirow{3}{*}{NeuralSym} & Retro$^\ast$ & 0.74$\pm$0.29 & 0.42$\pm$0.31 \\
 & Fixed & 0.82$\pm$0.25 & 0.48$\pm$0.28 \\
 & MORetro$^\ast$ (BO) & \textbf{0.84}$\pm$0.25$^{\ast}$ & \textbf{0.51}$\pm$0.27$^{\ast \ast}$ \\ \hline
\end{tabular}
\\{\centering Statistical significance: $^{\ast \ast}$ $p<0.01$, $^{\ast}$ $p<0.05$; Wilcoxon Test \par}
\end{table}

\begin{figure}[b!]
    \centering
    \includegraphics[width=\linewidth]{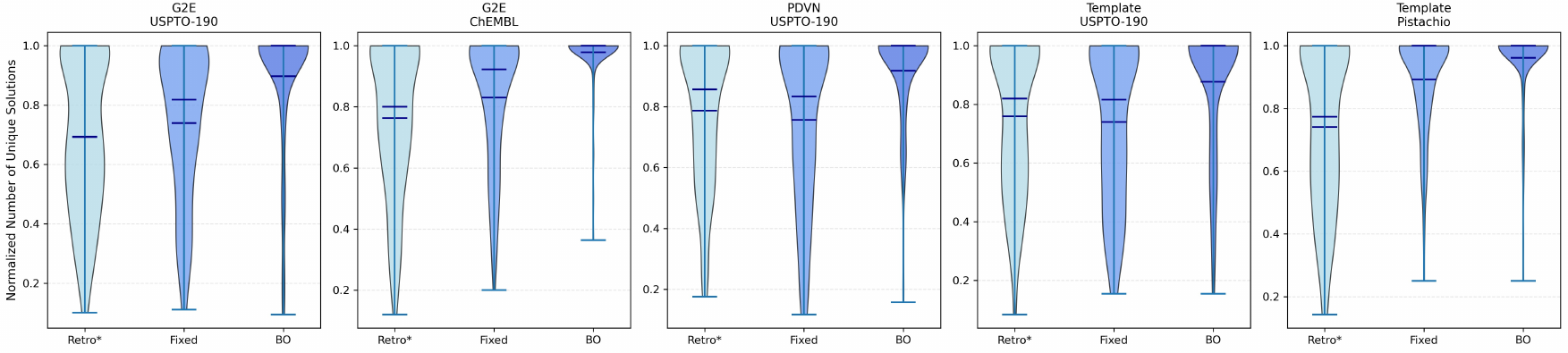}
    \caption{Per-molecule normalized number of unique solutions for all datasets investigated.}
    \label{fig:solutions_sup}
\end{figure}

\subsection{Additional Experiments}
\label{app:add_exp}
\subsubsection{Results for Additional (Fixed) Weights}

\begin{table}[h]
\centering
\caption{Additional fixed-weight results on USPTO-190. Each weight vector $\mathbf{w}$ fixes the scalarization coefficients for the four objectives. HV is reported as mean\,$\pm$\,std; superscripts denote significance vs MORetro$^\ast$ (BO) ($^{**}$: $p<0.01$, $^{*}$: $p<0.05$). ``Baseline Dom.'' is the percentage of the weight's Pareto front dominated by MORetro$^\ast$ (BO); ``Self Dom.'' is the percentage of the MORetro$^\ast$ (BO) front dominated by the weight.}
\label{tab:fixed_weights_ablation}
\resizebox{0.95\textwidth}{!}{%
\begin{tabular}{llcccc}
\toprule
\textbf{Model} & \textbf{Weights $\mathbf{w}$} & \textbf{Success Rate (\%)} & \textbf{HV (mean$\pm$std)} & \textbf{Baseline Dom. (\%)} & \textbf{Self Dom. (\%)} \\
\midrule
\multirow{5}{*}{G2E}
 & $[0.25,\,0.25,\,0.25,\,0.25]$ & 88.42 & $0.97\pm0.42^{**}$ & 20.07 & 3.10 \\
 & $[0.33,\,0.33,\,0.33,\,0.01]$ & 90.53 & $1.00\pm0.38^{**}$ & 12.47 & 5.25 \\
 & $[1.00,\,0.00,\,0.00,\,0.00]$ & 99.47 & $0.91\pm0.35^{**}$ & 37.33 & 7.72 \\
 & $[0.00,\,1.00,\,0.00,\,0.00]$ & 97.37 & $0.90\pm0.40^{**}$ & 26.09 & 3.34 \\
 & $[0.00,\,0.00,\,1.00,\,0.00]$ & 83.68 & $0.75\pm0.51^{**}$ & 42.69 & 1.57 \\
\midrule
\multirow{5}{*}{PDVN}
 & $[0.25,\,0.25,\,0.25,\,0.25]$ & 65.26 & $0.71\pm0.55^{**}$ & 39.79 & 1.84 \\
 & $[0.33,\,0.33,\,0.33,\,0.01]$ & 54.74 & $0.63\pm0.59^{**}$ & 47.75 & 2.31 \\
 & $[1.00,\,0.00,\,0.00,\,0.00]$ & 67.37 & $0.56\pm0.51^{**}$ & 56.99 & 3.02 \\
 & $[0.00,\,1.00,\,0.00,\,0.00]$ & 77.37 & $0.68\pm0.48^{**}$ & 39.71 & 2.43 \\
 & $[0.00,\,0.00,\,1.00,\,0.00]$ & 15.79 & $0.15\pm0.38^{**}$ & 89.86 & 0.00 \\
\midrule
\multirow{5}{*}{NeuralSym}
 & $[0.25,\,0.25,\,0.25,\,0.25]$ & 40.00 & $0.42\pm0.54^{**}$ & 36.89 & 2.53 \\
 & $[0.33,\,0.33,\,0.33,\,0.01]$ & 36.84 & $0.40\pm0.54^{**}$ & 36.31 & 1.23 \\
 & $[1.00,\,0.00,\,0.00,\,0.00]$ & 50.53 & $0.39\pm0.48^{**}$ & 32.30 & 18.97 \\
 & $[0.00,\,1.00,\,0.00,\,0.00]$ & 54.74 & $0.44\pm0.52^{**}$ & 21.95 & 14.14 \\
 & $[0.00,\,0.00,\,1.00,\,0.00]$ & 16.84 & $0.17\pm0.40^{**}$ & 73.67 & 0.00 \\
\bottomrule
\end{tabular}%
}
\end{table}

\subsubsection{Impact of Sampling Strategies}
Comparative study between different sampling strategies. Note that the reported R2 values for BO in Table \ref{tab:sampling_strat} will differ from those reported in the main manuscript. This is because molecules that have no solutions are excluded for the R2 calculation across all methods. As we are comparing BO to different methods than in the main body, different molecules will be excluded.
\begin{table}[h]
\renewcommand{\arraystretch}{1.2}
\centering
\caption{Comparison between BO, grid, and Sobol (Random) sampling in terms of Pareto front quality for all datasets}
\label{tab:sampling_strat}
\resizebox{\columnwidth}{!}{%
\begin{tabular}{lllcccccc}
\hline
\multirow{2}{*}{\textbf{Model}} & \multirow{2}{*}{\textbf{Dataset}} & \multirow{2}{*}{\textbf{Method}} & \textbf{HV} & \textbf{R2} & \textbf{\# Solutions} & \textbf{Retro$^\ast$} & \textbf{Fixed} & \textbf{Statistical} \\
 &  &  & mean$\pm$std & mean$\pm$std & mean$\pm$std & Dominated (\%) & Dominated (\%) & \textbf{Significance} \\ \hline
\multirow{6}{*}{G2E} & \multirow{3}{*}{USPTO-190} & Grid & 1.03$\pm$0.32 & 0.42$\pm$0.11 & 13.70$\pm$10.99 & 30.9$\pm$34.7 & 19.3$\pm$34.7 & BO$^\ast>$G$^{\ast \ast \ast}>$ R \\
 &  & Random & 1.01$\pm$0.37 & 0.45$\pm$0.10 & 11.57$\pm$9.41 & 29.8$\pm$34.2 & 15.2$\pm$30.0 &  \\
 &  & BO & 1.04$\pm$0.31 & 0.42$\pm$0.11 & 13.45$\pm$10.08 & 34.0$\pm$36.2 & 21.0$\pm$35.8 &  \\ \cline{2-9} 
 & \multirow{3}{*}{ChEMBL} & Grid & 1.02$\pm$0.39 & 0.45$\pm$0.14 & 12.44$\pm$12.19 & 24.2$\pm$32.8 & 9.7$\pm$24.8 & G $\simeq$ BO $^{\ast \ast \ast}>$ R \\
 &  & Random & 1.01$\pm$0.39 & 0.46$\pm$0.14 & 11.19$\pm$11.20 & 23.0$\pm$32.5 & 8.7$\pm$23.6 &  \\
 &  & BO & 1.02$\pm$0.38 & 0.45$\pm$0.14 & 12.24$\pm$11.78 & 24.0$\pm$33.2 & 9.7$\pm$24.9 &  \\ \hline
\multirow{3}{*}{PDVN} & \multirow{3}{*}{USPTO-190} & Grid & 0.90$\pm$0.41 & 0.44$\pm$0.12 & 12.12$\pm$11.14 & 18.6$\pm$30.3 & 11.2$\pm$26.2 & BO$^{\ast \ast \ast}>$ G$^{\ast \ast \ast}>$ R \\
 &  & Random & 0.86$\pm$0.43 & 0.46$\pm$0.11 & 10.18$\pm$9.11 & 15.8$\pm$28.6 & 8.8$\pm$23.4 &  \\
 &  & BO & 0.93$\pm$0.39 & 0.44$\pm$0.11 & 12.30$\pm$10.76 & 19.6$\pm$31.0 & 11.9$\pm$26.8 &  \\ \hline
\multirow{6}{*}{NeuralS} & \multirow{3}{*}{USPTO-190} & Grid & 0.52$\pm$0.54 & 0.45$\pm$0.12 & 4.96$\pm$7.63 & 23.0$\pm$35.1 & 16.3$\pm$33.7 & BO$^{\ast \ast \ast}>$ G$^{\ast \ast \ast}>$ R \\
 &  & Random & 0.50$\pm$0.54 & 0.48$\pm$0.12 & 4.16$\pm$6.46 & 21.9$\pm$33.1 & 9.7$\pm$25.6 &  \\
 &  & BO & 0.56$\pm$0.51 & 0.45$\pm$0.12 & 5.13$\pm$7.50 & 31.0$\pm$37.2 & 17.6$\pm$34.8 &  \\ \cline{2-9} 
 & \multirow{3}{*}{Pistachio} & Grid & 1.06$\pm$0.26 & 0.44$\pm$0.13 & 9.52$\pm$8.10 & 16.5$\pm$26.6 & 3.3$\pm$11.5 & G $\simeq$ BO $\simeq$ R \\
 &  & Random & 1.06$\pm$0.26 & 0.44$\pm$0.13 & 9.34$\pm$7.75 & 16.9$\pm$27.3 & 2.4$\pm$8.1 &  \\
 &  & BO & 1.06$\pm$0.24 & 0.44$\pm$0.14 & 9.71$\pm$8.26 & 17.1$\pm$27.3 & 3.9$\pm$13.8 &  \\ \hline
\end{tabular}%
}
\end{table}

The table below compares the success-rate of different sampling strategies on the investigated datasets.

\begin{table}[h]
\centering
\renewcommand{\arraystretch}{1}
\caption{Comparison between BO, grid, and Sobol (Random) sampling in terms of success rate ($\downarrow$ failed targets)}
\label{tab:sampling_zero}
\begin{tabular}{llcccc}
\hline
{\textbf{Model}} &{\textbf{Dataset}} & \textbf{Grid} & \textbf{Random} & \textbf{BO} & \textbf{Total} \\
\hline
\multirow{2}{*}{G2E} & USPTO-190 & 5 & 13 & \textbf{4} & 190 \\
 & ChEMBL & \textbf{13} & 14 & \textbf{13} & 150 \\ \hline
PDVN & USPTO-190 & 11 & 18 & \textbf{6} & 190 \\ \hline
\multirow{2}{*}{NeuralSym} & USPTO-190 & 88 & 95 & \textbf{76} & 190 \\
 & Pistachio & 2 & 3 & \textbf{1} & 150 \\ \hline
\end{tabular}%
\end{table}

\subsection{Comparison to MO-MCTS}
\label{app:mcts}

\subsubsection{MO-MCTS for Additive Objectives}

Relative to the original implementation by \citet{Lai-2025}, the algorithmic steps (selection, expansion, rollout, and backpropagation) remain unchanged. The only methodological change is the reward definition, which is adapted for additive reaction costs.

In our implementation, the terminal branch value ($R_j(b)$) for objective $j$ is computed via:
\begin{equation}
\xi_j(b)=\mathrm{length}(b)-k\sum_{i} \tilde{r}_j(R_i),
\end{equation}
\begin{equation}
W_j(b)=\max\!\left(0,\frac{L_{\max}-\xi_j(b)}{L_{\max}}\right),
\end{equation}
\begin{equation}
R_j(b)=z\cdot W_j(b),
\end{equation}
where $L_{\max}$ is the search depth limit, $k$ is a scaling factor, and $z$ is a rollout-status multiplier. This follows the same strategy as proposed by \citet{Segler-2018}.

Using our cost notation $C_j(R)$, we define the reward $\tilde{r}_j$ as:
\begin{equation}
\tilde{r}_j(R_i)=
\begin{cases}
1-C_j(R_i), & j\in\{\text{Sust}, \text{Tox}, \text{Scale}\},\\[2pt]
C_j(R_i), & j=\text{Guide}.
\end{cases}
\end{equation}

The status multiplier is
\begin{equation}
z=
\begin{cases}
>1, & \text{if the rollout leaf is fully solved},\\[2pt]
\in[0,1], & \text{if partially solved (solved-molecule ratio)},\\[2pt]
-1, & \text{if no molecule is solved}.
\end{cases}
\end{equation}

All other components are identical to the original AiZynthFinder MO-MCTS (including edge statistics updates and child scoring/selection policy) \cite{Lai-2025}.

\subsubsection{Preliminary Results}

Table \ref{tab:MCTS-res} shows that our re-implementation of MO-MCTS is not competitive under the fixed budget of $N_B=300$. We believe the cause to be twofold: 1) Rollouts drain the single-step budget quickly without guaranteeing to discover new synthesis routes. 2) Rollouts already suffer from high variance when optimizing one objective. This is exacerbated when one tries to estimate $N_D$ objectives. Note that no hyperparameter tuning was performed for MO-MCTS.

\begin{table*}[h]
\centering
\caption{Comparison to re-implementation of MO-MCTS}
\renewcommand{\arraystretch}{1.02}
\label{tab:MCTS-res}
\begin{tabular}{lcccc}
\hline
 \textbf{Model} & \textbf{Success rate (\%)} & \textbf{HV ($\uparrow$) - all mols} & \textbf{HV ($\uparrow$) - only successful mols} & \textbf{$\Delta HV$ vs. MORetro$^\ast$} \\ \hline
G2E & 95.3 & $0.43 \pm 0.48$ & $0.45 \pm 0.48$ & -0.61 \\
PDVN & 93.7 & $0.60 \pm 0.44$ & $0.64 \pm 0.42$ & -0.32 \\
NeuralSym & 33.7 & $0.17 \pm 0.37$ & $0.51 \pm 0.48$ & -0.39 \\ \hline
\end{tabular}
\end{table*}

\newpage

\section{Toxicity Assessment Prompt Template}
\label{app:prompt}

\begin{tcolorbox}[
  colback=gray!5,
  colframe=black,
  title={Toxicity Assessment Prompt Template (Excerpt)},
  fonttitle=\bfseries
]
\small
\textbf{System role:} Expert toxicologist and chemist.

\medskip
\textbf{Task:}  
Given a molecular structure in SMILES notation, assign a toxicity score in the range
$[0,1]$.

\medskip
\textbf{Toxicity scale:}
\begin{itemize}
  \item $0.0$ - Non-toxic
  \item $0.1-0.3$ - Low toxicity
  \item $0.4-0.6$ - Moderate toxicity
  \item $0.7-0.9$ - High toxicity
  \item $1.0$ - Extremely toxic
\end{itemize}

\medskip
\textbf{Evaluation procedure (priority order):}
\begin{enumerate}
  \item Check custom reference studies for exact or similar compounds
  \item Acute toxicity data (e.g.\ LD$_{50}$, LC$_{50}$)
  \item Environmental impact and persistence
  \item Regulatory hazard classifications (GHS, OSHA)
  \item Structural alerts and known mechanisms of toxicity
\end{enumerate}

\medskip
\textbf{Special rule for transition-metal catalysts:}  
If a transition metal appears in the SMILES, a metal-specific baseline toxicity
score from custom catalyst greenness data is used and adjusted based on ligand
effects (e.g.\ chelation or increased bioavailability).

\medskip
\textbf{Output format:}
\begin{quote}
\texttt{Score: [0$-$1]} \\
\texttt{Explanation: 2$-$3 sentence justification citing reference data when available}
\end{quote}
\end{tcolorbox}

\end{document}